\newtheorem{lem}{\textbf{Lemma}}
\newcommand{\E}{\mathbb{E}}
\newcommand{\Sv}{$\mathcal{S}$}
\newcommand{\Nv}{$\mathcal{N}$}
\newcommand{\x}{\mathbf{x}}
\newcommand{\y}{\mathbf{y}}
\newcommand{\z}{\mathbf{z}}
\newcommand{\e}{\mathbf{e}}
\newcommand{\uv}{\mathbf{u}}
\newcommand{\vv}{\mathbf{v}}
\title{Hyperspherical Variational Auto-Encoders}
\author{ {\bf Tim R. Davidson\thanks{\;\;Equal contribution. . Correspondence to: Nicola De Cao $<$\href{mailto:nicola.decao@gmail.com}{nicola.decao@gmail.com}$>$.}}
\quad
{\bf Luca Falorsi\footnotemark[1]}
\quad
{\bf Nicola De Cao\footnotemark[1]}
\quad
{\bf Thomas Kipf}
\quad
{\bf Jakub M. Tomczak}\\\\
University of Amsterdam
}
\begin{document}

\maketitle

\begin{abstract}
The Variational Auto-Encoder (VAE) is one of the most used unsupervised machine learning models. But although the default choice of a Gaussian distribution for both the prior and posterior represents a mathematically convenient distribution often leading to competitive results, we show that this parameterization fails to model data with a latent hyperspherical structure. To address this issue we propose using a von Mises-Fisher (vMF) distribution instead, leading to a hyperspherical latent space. Through a series of experiments we show how such a hyperspherical VAE, or  $\mathcal{S}$-VAE, is more suitable for capturing data with a hyperspherical latent structure, while outperforming a normal,  $\mathcal{N}$-VAE, in low dimensions on other data types.
\end{abstract}

\section{INTRODUCTION}

First introduced by \cite{journals/corr/KingmaW13, rezende2014stochastic}, the Variational Auto-Encoder (VAE) is an unsupervised generative model that presents a principled fashion for performing variational inference using an auto-encoding architecture. Applying the non-centered parameterization of the variational posterior \citep{KingmaWelling-2014-eff-gradient-inf}, further simplifies sampling and allows to reduce bias in calculating gradients for training. Although the default choice of a Gaussian prior is mathematically convenient, we can show through a simple example that in some cases it breaks the assumption of an \textit{uninformative} prior leading to unstable results. Imagine a dataset on the circle $\mathcal{Z} \subset \mathcal{S}^1$, that is subsequently embedded in $\mathbb{R}^N$ using a transformation $f$ to obtain $f: \mathcal{Z} \to \mathcal{X} \subset \mathbb{R}^N$. Given two hidden units, an autoencoder quickly discovers the latent circle, while a normal VAE becomes highly unstable. This is to be expected as a Gaussian prior is concentrated around the origin, while the KL-divergence tries to reconcile the differences between $\mathcal{S}^1$ and $\mathbb{R}^2$. 

The fact that some data types like \textit{directional data} are better explained through spherical representations is long known and well-documented \citep{mardia1975statistics, fisher1987statistical}, with examples spanning from protein structure, to observed wind directions. Moreover, for many modern problems such as text analysis or image classification, data is often first normalized in a preprocessing step to focus on the directional distribution. Yet, few machine learning methods explicitly account for the intrinsically spherical nature of some data in the modeling process. In this paper, we propose to use the \textit{von Mises-Fisher} (vMF) distribution as an alternative to the Gaussian distribution. This replacement leads to a hyperspherical latent space as opposed to a hyperplanar one, where the Uniform distribution on the hypersphere is conveniently recovered as a special case of the vMF. Hence this approach allows for a truly uninformative prior, and has a clear advantage in the case of data with a hyperspherical interpretation. This was previously attempted by \cite{hasnat2017mises}, but crucially they do not learn the concentration parameter around the mean, $\kappa$.

In order to enable training of the concentration parameter, we extend the \textit{reparameterization trick} for rejection sampling as recently outlined in \cite{rejection-repar} to allow for $n$ additional transformations. We then combine this with the rejection sampling procedure proposed by \cite{sample-vmf} to efficiently reparameterize the VAE \footnote{Code freely  available on: \url{https://github.com/nicola-decao/s-vae}}.

We demonstrate the utility of replacing the normal distribution with the von Mises-Fisher distribution for generating latent representations by conducting a range of experiments in three distinct settings. First, we show that our \Sv-VAEs outperform VAEs with the Gaussian variational posterior (\Nv-VAEs) in recovering a hyperspherical latent structure. Second, we conduct a thorough comparison with \Nv-VAEs on the MNIST dataset through an unsupervised learning task and a semi-supervised learning scenario. Finally, we show that \Sv-VAEs can significantly improve link prediction performance on citation network datasets in combination with a \textit{Variational Graph Auto-Encoder} (VGAE) \citep{kipf2016VGAE}. 

\begin{figure*}[t!]
\centering
     \subfigure[Original]{\includegraphics[width=0.19\textwidth]{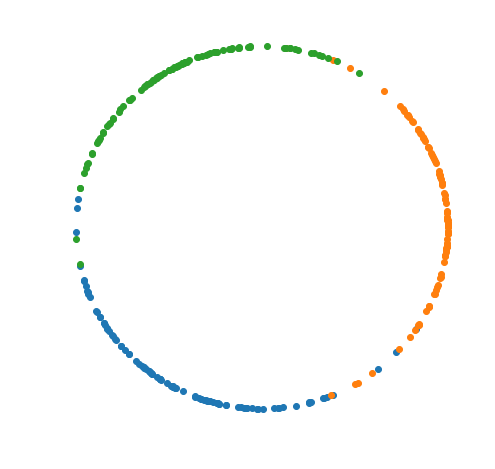} \label{fig:toy1_orig}}
     \subfigure[Autoencoder]{\includegraphics[width=0.19\textwidth]{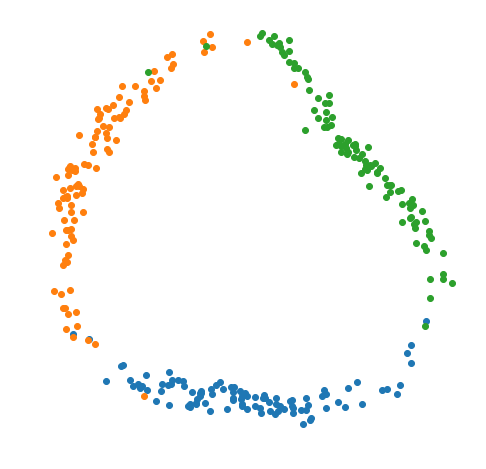} \label{fig:toy1_AE}}
     \subfigure[$\mathcal{N}$-VAE]{\includegraphics[width=0.19\textwidth]{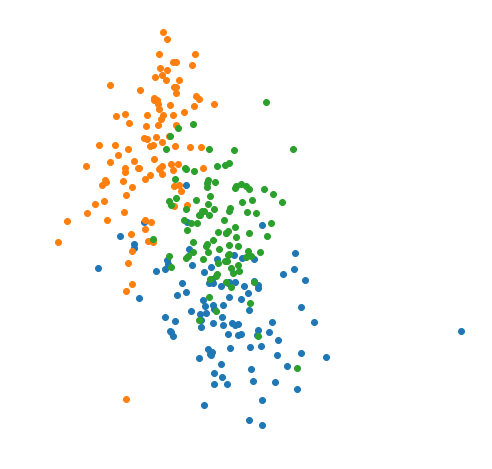} \label{fig:toy1_VAE}}
     \subfigure[$\mathcal{N}$-VAE, $\beta=0.1$]{\includegraphics[width=0.19\textwidth]{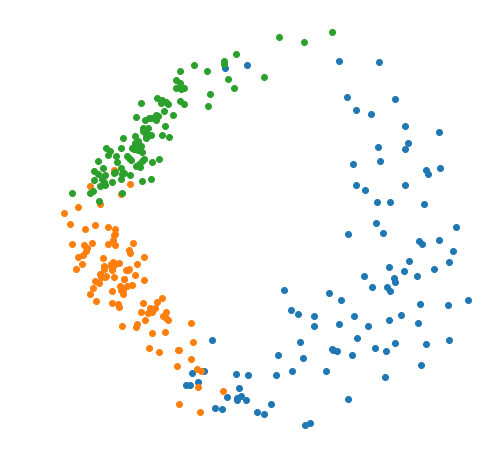} \label{fig:toy1_VAE2}}
     \subfigure[$\mathcal{S}$-VAE]{\includegraphics[width=0.19\textwidth]{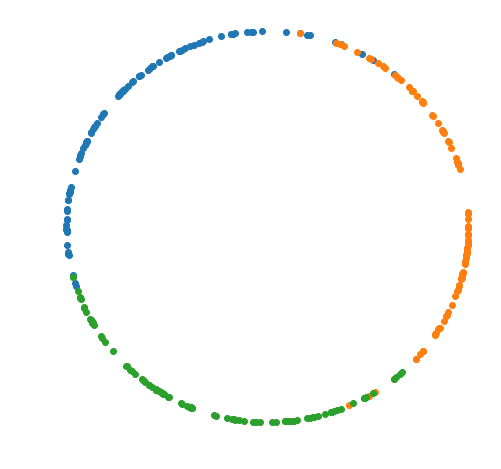}
     \label{fig:toy1_SVAE}}
% \vskip -0.2in
\caption{Plots of the original latent space (a) and learned latent space representations in different settings, where $\beta$ is a re-scaling factor for weighting the KL divergence. (Best viewed in color)}
\label{fig:exp-toy1}
\end{figure*}

\section{VARIATIONAL AUTO-ENCODERS}
\label{sec:vae}

\subsection{FORMULATION} \label{subsec:formulation}
In the VAE setting, we have a latent variable model for data, where $\z \in \mathbb{R}^{M}$ denotes latent variables, $\x$ is a vector of $D$ observed variables, and $p_{\phi}(\x,\z)$ is a parameterized model of the joint distribution. Our objective is to optimize the log-likelihood of the data, $\log \int p_{\phi}(\x,\z) d\z$. When $p_{\phi}(\x,\z)$ is parameterized by a neural network, marginalizing over the latent variables is generally intractable. One way of solving this issue is to maximize the Evidence Lower Bound (ELBO)
\begin{align}
\log \int p_{\phi}(\x,\z) d\z \ge \; \E_{q(\z)}&[\log p_{\phi}(\x|\z)] + \nonumber \\
&- KL(q(\z)||p(\z)),
\end{align}
where $q(\z)$ is the approximate posterior distribution, belonging to a family $\mathcal{Q}$. The bound is tight if $q(\z) = p(\z|\x)$, meaning $q(\z)$ is optimized to approximate the true posterior. While in theory $q(\z)$ should be optimized for every data point $\x$, to make inference more scalable to larger datasets the VAE setting introduces an inference network $q_{\psi}(\z|\x;\theta)$ parameterized by a neural network that outputs a probability distribution for each data point $\x$. The final objective is therefore to maximize
\begin{align}
\mathcal{L}(\phi,\psi) =\; \E_{q_{\psi}(\z|\x;\theta)}&[\log p_{\phi}(\x|\z)] + \nonumber \\
&- KL(q_{\psi}(\z|\x;\theta)||p(\z)),
\end{align}
In the original VAE both the prior and the posterior are defined as normal distributions. We can further efficiently approximate the ELBO by Monte Carlo estimates, using the \textit{reparameterization trick} \citep{journals/corr/KingmaW13, rezende2014stochastic}. This is done by expressing a sample of $\z\sim q_{\psi}(\z|\x;\theta)$, as $\z = h(\theta,\varepsilon,\x)$, where $h$ is a reparameterization transformation and $\varepsilon\sim s(\varepsilon)$ is some noise random variable independent from $\theta$.

\subsection{THE LIMITATIONS OF A GAUSSIAN DISTRIBUTION PRIOR} \label{subsec:lim-gaussian-prior}

\paragraph{Low dimensions: origin gravity} \label{par:lowdim}
In low dimensions, the Gaussian density presents a concentrated probability mass around the origin, encouraging points to cluster in the center. This is particularly problematic when the data is divided into multiple clusters. Although an ideal latent space should separate clusters for each class, the normal prior will encourage all the cluster centers towards the origin. An ideal prior would only stimulate the variance of the posterior without forcing its mean to be close to the center. A prior satisfying these properties is a uniform over the entire space. Such a uniform prior, however, is not well defined on the hyperplane.
 
\paragraph{High dimensions: soap bubble effect} \label{par:highdim}
It is a well-known phenomenon that the standard Gaussian distribution in high dimensions tends to resemble a uniform distribution on the surface of a hypersphere, with the vast majority of its mass concentrated on the hyperspherical shell. Hence it would appear interesting to compare the behavior of a Gaussian approximate posterior with an approximate posterior already naturally defined on the hypersphere. This is also motivated from a theoretical point of view, since the Gaussian definition is based on the $L_2$ norm that suffers from the curse of dimensionality.

\subsection{BEYOND THE HYPERPLANE} \label{subsec:beyond-hyperplanes}
Once we let go of the hyperplanar assumption, the possibility of a uniform prior on the hypersphere opens up. Mirroring our discussion in the previous subsection, such a prior would exhibit no pull towards the origin allowing clusters of data to evenly spread over the surface with no directional bias. Additionally, in higher dimensions, the cosine similarity is a more meaningful distance measure than the Euclidean norm.
 
\paragraph{Manifold mapping}\label{par:manifold}
In general, exploring VAE models that allow a mapping to distributions in a latent space not homeomorphic to $\mathbb{R}^D$ is of fundamental interest. Consider data lying in a small $M$-dimensional manifold $\mathcal{M}$, embedded in a much higher dimensional space $\mathcal{X} = \mathbb{R}^N$. For most real data, this manifold will likely not be homeomorphic to $\mathbb{R}^{M}$. An encoder can be considered as a smooth map $enc : \mathcal{X} \to \mathcal{Z}  = \mathbb{R}^{D}$ from the original space to $\mathcal{Z}$. The restriction of the encoder to $\mathcal{M}$, $enc|_\mathcal{M} : \mathcal{M} \to \mathcal{Z}$ will also be a smooth mapping. However since $\mathcal{M}$ is not homeomorphic to $\mathcal{Z}$ if $D \le M$, then $enc|_\mathcal{M}$ cannot be a homeomorphism. That is, there exists no invertible and globally continuous mapping between the coordinates of $\mathcal{M}$ and the ones of $\mathcal{Z}$. Conversely if $D > M$ then $\mathcal{M}$ can be smoothly embedded in $\mathcal{Z}$ for $D$ sufficiently big
\footnote{By the Whitney embedding theorem any smooth real $M$-dimensional manifold can be smoothly embedded in $\mathbb{R}^{2M}$}
, such that $enc|_\mathcal{M}:\ \mathcal{M} \to enc|_\mathcal{M}(\mathcal{M})=: emb(\mathcal{M})\subset\mathcal{Z}$ is a homeomorphism and $emb(\mathcal{M})$ denotes the embedding of $\mathcal{M}$. Yet, since $D > M$, when taking random points in the latent space they will most likely \textit{not} be in $emb(\mathcal{M})$ resulting in a poorly reconstructed sample. 
 
The VAE tries to solve this problem by forcing $\mathcal{M}$ to be mapped into an approximate posterior distribution that has support in the entire $\mathcal{Z}$. Clearly, this approach is bound to fail since the two spaces have a fundamentally different structure. This can likely produce two behaviors: first, the VAE could just smooth the original embedding $emb(\mathcal{M})$ leaving most of the latent space empty, leading to bad samples. Second, if we increase the KL term the encoder will be pushed to occupy all the latent space, but this will create instability and discontinuity, affecting the convergence of the model. To validate our intuition we performed a small proof of concept experiment using $\mathcal{M} = \mathcal{S}^1$, which is visualized in Figure \ref{fig:exp-toy1}. Note that as expected the auto-encoder in Figure \ref{fig:toy1_AE} mostly recovers the original latent space of Figure\ref{fig:toy1_orig} as there are no distributional restrictions. In Figure \ref{fig:toy1_VAE} we clearly observe for the \Nv-VAE that points collapse around the origin due to the KL, which is much less pronounced in Figure \ref{fig:toy1_VAE2} when its contribution is scaled down. Lastly, the \Sv-VAE almost perfectly recovers the original circular latent space. The observed behavior confirms our intuition. 

To solve this problem the best option would be to directly specify a $\mathcal{Z}$ homeomorphic to $\mathcal{M}$ and distributions on $\mathcal{M}$. However, for real data discovering the structure of $\mathcal{M}$ will often be a difficult inference task. Nevertheless, we believe this shows that investigating VAE architectures that map to posterior distributions defined on manifolds different than the Euclidean space is a topic worth to be explored. In that sense, this work represents an initial step in this research direction. 

\section{REPLACING GAUSSIAN WITH VON MISES-FISHER}
\subsection{VON MISES-FISHER DISTRIBUTION}\label{subsec:vMF}

The \textit{von Mises-Fisher} (vMF) distribution is often described as the Normal Gaussian distribution on a hypersphere. Analogous to a Gaussian, it is parameterized by $\mu\in\mathbb{R}^{m}$ indicating the mean direction, and $\kappa\in\mathbb{R}_{\geq0}$ the concentration around $\mu$. For the special case of $\kappa = 0$, the vMF represents a Uniform distribution. The probability density function of the vMF distribution for a random unit vector $\z \in \mathbb{R}^{m}$ (or $\z \in \mathcal{S}^{m-1}$) is then defined as
\begin{align}
    q(\z|\mathbf{\mu}, \kappa) &=  \mathcal{C}_m(\kappa)\exp{(\kappa \mathbf{\mu}^T\z)} \label{vMF-def}\\
    \mathcal{C}_m(\kappa) &= \dfrac{\kappa^{m/2 - 1}}{(2\pi)^{m/2}\mathcal{I}_{m/2 - 1}(\kappa)}, \label{eq-norm-c}
\end{align}
where $||\mathbf{\mu}||^2 = 1$, $\mathcal{C}_m(\kappa)$ is the normalizing constant, and $\mathcal{I}_v$ denotes the modified Bessel function of the first kind at order $v$.

\subsection{KL DIVERGENCE} \label{subsec:kl}
As previously emphasized, one of the main advantages of using the vMF distribution as an approximate posterior is that we are able to place a uniform prior on the latent space. The KL divergence term $KL(\text{vMF}(\mu, \kappa)|| U(S^{m-1}))$ to be optimized is: 
\begin{align} \label{eq:kl}
\kappa \frac{\mathcal{I}_{m/2}(k)}{\mathcal{I}_{m/2 - 1}(k)} + \log \mathcal{C}_m(\kappa) - \log \left( \frac{2(\pi^{m/2})}{\Gamma(m/2)} \right)^{-1},
\end{align}
see Appendix \ref{ap:kl-divergence} for complete derivation. Notice that since the KL term does not depend on $\mu$, this is only optimized in the reconstruction term. The above expression cannot be handled by automatic differentiation packages because of the modified Bessel function in $\mathcal{C}_m(\kappa)$. Thus, to optimize this term we derive the gradient with respect to the concentration parameter $\nabla_{\kappa}KL(\text{vMF}(\mu, \kappa)|| U(S^{m-1}))$:
\begin{align}\label{eq:kl-grad}
\frac12 k \Biggl(& \frac{\mathcal{I}_{m/2+1}(k)}{\mathcal{I}_{m/2 - 1}(k)} +
\nonumber \\
&- \frac{\mathcal{I}_{m/2}(k) \left( \mathcal{I}_{m/2 - 2}(k) + \mathcal{I}_{m/2}(k) \right) }{\mathcal{I}_{m/2 - 1}(k)^2} + 1 \Biggl),
\end{align}
where the modified Bessel functions can be computed without numerical instabilities using the exponentially scaled modified Bessel function.

\subsection{SAMPLING PROCEDURE}\label{sec:sampling}

\begin{algorithm}[tb]\label{sampling-vmf}
   \caption{vMF sampling}
   \label{alg:vmf-sample}
\begin{algorithmic}
    \STATE {\bfseries Input:} dimension $m$, mean $\mu$, concentration $\kappa$
    \STATE sample $\vv \sim U(\mathcal{S}^{m-2})$
    \STATE sample $\omega \sim g(\omega| \kappa,m) \propto \exp(\omega \kappa)(1-\omega^2)^{\frac{1}{2}(m-3)}$ \COMMENT{acceptance-rejection sampling}
    \STATE $\z'\gets (\omega; (\sqrt{1-\omega^2}) \vv^\top )^\top$
    \STATE $U\gets Householder(\e_1,\mu)$ \COMMENT{Householder transform}
    \STATE{\bfseries Return:}{ $U\z'$}
\end{algorithmic}
\end{algorithm}

To sample from the vMF we follow the procedure of \citet{sample-vmf}, outlined in Algorithm \ref{alg:vmf-sample}. We first sample from a vMF $q(\z| \e_1 , \kappa)$ with modal vector $\e_1 = (1, 0, \cdots, 0)$. Since the vMF density is uniform in all the $m-2$ dimensional sub-hyperspheres $\{\x \in \mathcal{S}^{m-1}| \e_1^\top \x = \omega\}$, the sampling technique reduces to sampling the value $\omega$ from the univariate density $g(\omega|\kappa,m)\propto \exp(\kappa \omega)(1-\omega^2)^{(m-3)/2}, \quad \omega\in[-1,1]$, using an acceptance-rejection scheme. After getting a sample from  $q(\z| \e_1 , \kappa)$  an orthogonal transformation $U(\mu)$ is applied such that the transformed sample is distributed according to $q(\z|\mu , \kappa)$.
This can be achieved using a Householder reflection such that $U(\mu)\e_1 = \mu$. A more in-depth explanation of the sampling technique can be found in Appendix \ref{app:vmf-sample}. 

It is worth noting that the sampling technique does not suffer from the curse of dimensionality, as the acceptance-rejection procedure is only applied to a univariate distribution. Moreover in the case of $\mathcal{S}^2$, the density $g(\omega|\kappa,3)$ reduces to $g(\omega|\kappa,3)\propto \exp(k\omega)\mathbbm{1}_{[-1,+1]}(\omega)$ which can be directly sampled without rejection.

\subsection{N-TRANSFORMATION REPARAMETERIZATION TRICK}\label{sec:reparameterization}

While the \textit{reparameterization trick} is easily implementable in the normal case, unfortunately it can only be applied to a handful of distributions. However a recent technique introduced by \cite{rejection-repar} allows to extend the reparameterization trick to the wide class of distributions that can be simulated using rejection sampling. Dropping the dependence from $\x$ for simplicity, assume the approximate posterior is of the form $g(\omega| \theta)$ and that it can be sampled by making proposals from $r(\omega| \theta)$. If {\it the proposal distribution can be reparameterized} we can still perform the reparameterization trick. Let $\varepsilon \sim s(\varepsilon)$, and $\omega = h(\varepsilon,\theta)$, a reparameterization of the proposal distribution, $r(\omega|\theta)$. Performing the reparameterization trick for $g(\omega| \theta)$ is made possible by the fundamental lemma proven in \citep{rejection-repar}:
\begin{lem}\label{normal-repar}
Let $f$ be any measurable function and $\varepsilon \sim \pi(\varepsilon| \theta) = s(\varepsilon)\dfrac{g(h(\varepsilon,\theta)| \theta)}{r(h(\varepsilon,\theta)| \theta)}$ the distribution of the accepted sample.  Then:
\begin{align}
    \E_{ \pi(\varepsilon| \theta)} &[f(h(\varepsilon, \theta))] = \int f(h(\varepsilon, \theta))\pi(\varepsilon| \theta) d\varepsilon \nonumber \\
    &= \int f(\omega)g(\omega|\theta)d\omega = \E_{g(\omega|\theta)}[f(\omega)],
\end{align}
\end{lem}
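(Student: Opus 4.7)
The plan is to prove the identity by a straightforward change of variables; no probabilistic machinery beyond the pushforward formula for densities is needed. I would start from the leftmost integral
\[
\int f(h(\varepsilon,\theta))\,\pi(\varepsilon|\theta)\,d\varepsilon
\]
and substitute the hypothesized expression for $\pi(\varepsilon|\theta)$. This rewrites the integrand as $f(h(\varepsilon,\theta))\,s(\varepsilon)\,g(h(\varepsilon,\theta)|\theta)/r(h(\varepsilon,\theta)|\theta)$, so that every appearance of $\varepsilon$ is either inside the composition $h(\varepsilon,\theta)$ or inside the base noise density $s(\varepsilon)$.

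The key step I would then invoke is that, by hypothesis, $\omega = h(\varepsilon,\theta)$ is a reparameterization of the proposal $r$; equivalently, $r(\omega|\theta)\,d\omega$ is precisely the pushforward of $s(\varepsilon)\,d\varepsilon$ under $h(\cdot,\theta)$. Performing this change of variables simultaneously (i) converts $s(\varepsilon)\,d\varepsilon$ into $r(\omega|\theta)\,d\omega$ and (ii) replaces $h(\varepsilon,\theta)$ by $\omega$ everywhere, after which the factor $r(\omega|\theta)$ picked up from the pushforward cancels the $r(\omega|\theta)$ in the denominator. The resulting integral is $\int f(\omega)\,g(\omega|\theta)\,d\omega = \E_{g(\omega|\theta)}[f(\omega)]$, which is the claimed right-hand side.

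The only step requiring genuine care---and thus where I expect the main obstacle to lie---is justifying the change of variables together with the claim that $\pi(\varepsilon|\theta)$ is a bona fide probability density. For the change of variables I would assume that $h(\cdot,\theta)$ is a measurable bijection between the supports of $s$ and $r$ (so the Jacobian formula applies), or, failing injectivity, appeal to the abstract pushforward/Radon--Nikodym formulation directly. For the normalization of $\pi$, I would observe that running the same computation with $f\equiv 1$ gives $\int \pi(\varepsilon|\theta)\,d\varepsilon = \int g(\omega|\theta)\,d\omega = 1$, which both certifies $\pi$ as a density and recovers the standard rejection-sampling interpretation (with any envelope constant $M$ absorbed into the way the ratio $g/r$ is written, so that acceptance probability and normalization are consistent).
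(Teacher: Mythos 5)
Your proof is correct and takes essentially the same route the paper relies on: the paper defers Lemma \ref{normal-repar} to \citet{rejection-repar}, and its own proof of Lemma \ref{our-repar} in Appendix \ref{app:reparameterization-trick} uses precisely your key step, the change of variables $\omega = h(\varepsilon,\theta)$ under which $s(\varepsilon)\,d\varepsilon$ becomes $r(\omega|\theta)\,d\omega$ and the proposal density cancels, leaving $\int f(\omega)g(\omega|\theta)\,d\omega$. Your additional remarks on the normalization of $\pi(\varepsilon|\theta)$ and on replacing invertibility of $h(\cdot,\theta)$ by the abstract pushforward argument are sound refinements but do not change the substance of the argument.
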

Then the gradient can be taken using the log derivative trick:
\begin{align}\label{gcor+grep}
\nabla_{\theta}\E_{g(\omega|\theta)}[f(\omega)] = \nabla_{\theta} \E_{ \pi(\varepsilon| \theta)} [f(h(\varepsilon, \theta))] = \nonumber
\\
\E_{\pi(\varepsilon| \theta)}[ \nabla_{\theta} f(h(\varepsilon, \theta))] + \nonumber
\\
+ \;\E_{ \pi(\varepsilon| \theta)}\left[f(h(\varepsilon, \theta)) \nabla_{\theta}\log \dfrac{g(h(\varepsilon, \theta)|\theta)}{r(h(\varepsilon,\theta)|\theta)}\right],
\end{align}

However, in the case of the vMF a different procedure is required. After performing the transformation $h(\varepsilon, \theta)$ and accepting/rejecting the sample, we sample \textit{another} random variable $\vv \sim\pi_2(\vv)$, and then apply a transformation $\z = \mathcal{T}(h(\varepsilon, \theta), \vv; \theta)$, such that $\z \sim q_{\psi}(\z| \theta)$ is distributed as the approximate posterior (in our case a vMF). Effectively this entails applying another reparameterization trick after the acceptance/rejection step. To still be able to perform the reparameterization we show that Lemma \ref{normal-repar} fundamentally still holds in this case as well.
\begin{lem}\label{our-repar}
Let $f$ be any measurable function and $\varepsilon \sim \pi_1(\varepsilon| \theta) = s(\varepsilon)\dfrac{g(h(\varepsilon,\theta)| \theta)}{r(h(\varepsilon,\theta)| \theta)}$ the distribution of the accepted sample. 
Also let $\vv \sim\pi_2(v)$, and $\mathcal{T}$ a transformation that depends on the parameters such that if $\z = \mathcal{T}(\omega, v; \theta)$ with $\omega\sim g(\omega|\theta)$, then $ \sim q(\z| \theta) $:
\begin{align}
    \E_{ (\varepsilon,\vv)\sim\pi_1(\varepsilon| \theta)\pi_2(\vv)} \left[f\left(\mathcal{T}(h(\varepsilon, \theta), \vv; \theta)\right)\right] = \nonumber \\
    \int f(\z)q(\z|\theta)d\z = \E_{q(\z|\theta)}[f(\z)],
\end{align}
\begin{proof}
    See Appendix \ref{app:reparameterization-trick}.
\end{proof}
\end{lem}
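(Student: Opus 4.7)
The plan is to decompose the joint expectation using the independence of $\varepsilon$ and $\vv$, apply Lemma~\ref{normal-repar} to the inner integral over $\varepsilon$, and finally invoke the defining distributional property of $\mathcal{T}$ to rewrite the result as an expectation under $q(\z|\theta)$.

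First, since $\pi_1(\varepsilon|\theta)$ and $\pi_2(\vv)$ are independent, I would use Fubini to write the left-hand side as an iterated integral and fix $\vv$ in the outer integration. For each fixed $\vv$ and $\theta$, define the auxiliary function $\tilde{f}_{\vv,\theta}(\omega) := f\!\left(\mathcal{T}(\omega,\vv;\theta)\right)$, which is measurable in $\omega$ because $\mathcal{T}$ and $f$ are. Applying Lemma~\ref{normal-repar} to $\tilde{f}_{\vv,\theta}$ transforms the inner integral from an expectation under $\pi_1(\varepsilon|\theta)$ into one under $g(\omega|\theta)$. Concretely,
\[
\int \tilde{f}_{\vv,\theta}\!\left(h(\varepsilon,\theta)\right)\pi_1(\varepsilon|\theta)\,d\varepsilon = \int \tilde{f}_{\vv,\theta}(\omega)\,g(\omega|\theta)\,d\omega.
\]

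Substituting back, the whole expression collapses to $\E_{(\omega,\vv)\sim g(\omega|\theta)\pi_2(\vv)}\!\left[f(\mathcal{T}(\omega,\vv;\theta))\right]$. By the hypothesis on $\mathcal{T}$ — that $\z = \mathcal{T}(\omega,\vv;\theta)$ is distributed according to $q(\z|\theta)$ whenever $\omega\sim g(\omega|\theta)$ and $\vv\sim\pi_2(\vv)$ independently — the law of the unconscious statistician yields $\int f(\z)\,q(\z|\theta)\,d\z$, which is the desired right-hand side. For the concrete vMF instance, $\omega$ plays the role of the longitude drawn by acceptance-rejection, $\vv\sim U(\mathcal{S}^{m-2})$, and $\mathcal{T}$ lifts $(\omega,\vv)$ to $\mathcal{S}^{m-1}$ and rotates by the Householder reflection $U(\mu)$, so the hypothesis is exactly Algorithm~\ref{alg:vmf-sample}.

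The main obstacle is essentially conceptual rather than computational: one must verify that Lemma~\ref{normal-repar} still applies even though the integrand $\tilde{f}_{\vv,\theta}$ additionally depends on the auxiliary variable $\vv$ and on the parameters $\theta$. This is legitimate because the lemma only requires measurability in $\varepsilon$, and for each fixed $(\vv,\theta)$ the composition $\omega \mapsto f(\mathcal{T}(\omega,\vv;\theta))$ is measurable. A related subtlety is that $\mathcal{T}$ itself carries a $\theta$-dependence; for the identity of expectations this is harmless because both sides are evaluated at a single $\theta$, but it becomes important when differentiating, since $\nabla_\theta$ will pick up an extra pathwise term through $\mathcal{T}$ in addition to the score-function term of Eq.~\ref{gcor+grep}. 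I would flag this for the subsequent gradient computation but it does not affect the statement of Lemma~\ref{our-repar} itself.
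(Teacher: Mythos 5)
Your proposal is correct and follows essentially the same route as the paper's proof in Appendix~\ref{app:reparameterization-trick}: write the expectation as a double integral over $(\varepsilon,\vv)$, use the change of variables $\omega = h(\varepsilon,\theta)$ (the argument underlying Lemma~\ref{normal-repar}) to replace $\pi_1(\varepsilon|\theta)\,d\varepsilon$ by $g(\omega|\theta)\,d\omega$, and then use the distributional property of $\mathcal{T}$ to pass to $\int f(\z)q(\z|\theta)\,d\z$. Your explicit fixing of $\vv$ via the auxiliary function $\tilde f_{\vv,\theta}$ and your remark about the extra pathwise $\theta$-dependence of $\mathcal{T}$ (relevant only for the gradient, not the lemma) are just slightly more careful renderings of the same argument.
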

With this result we are able to derive a gradient expression similarly as done in equation \ref{gcor+grep}. We refer to Appendix \ref{ap:gradient-deriv} for a complete derivation. 

\begin{figure*}[t]
\centering
     \subfigure[$\mathbb{R}^2$ latent space of the $\mathcal{N}$-VAE.]{\includegraphics[width=0.32\textwidth]{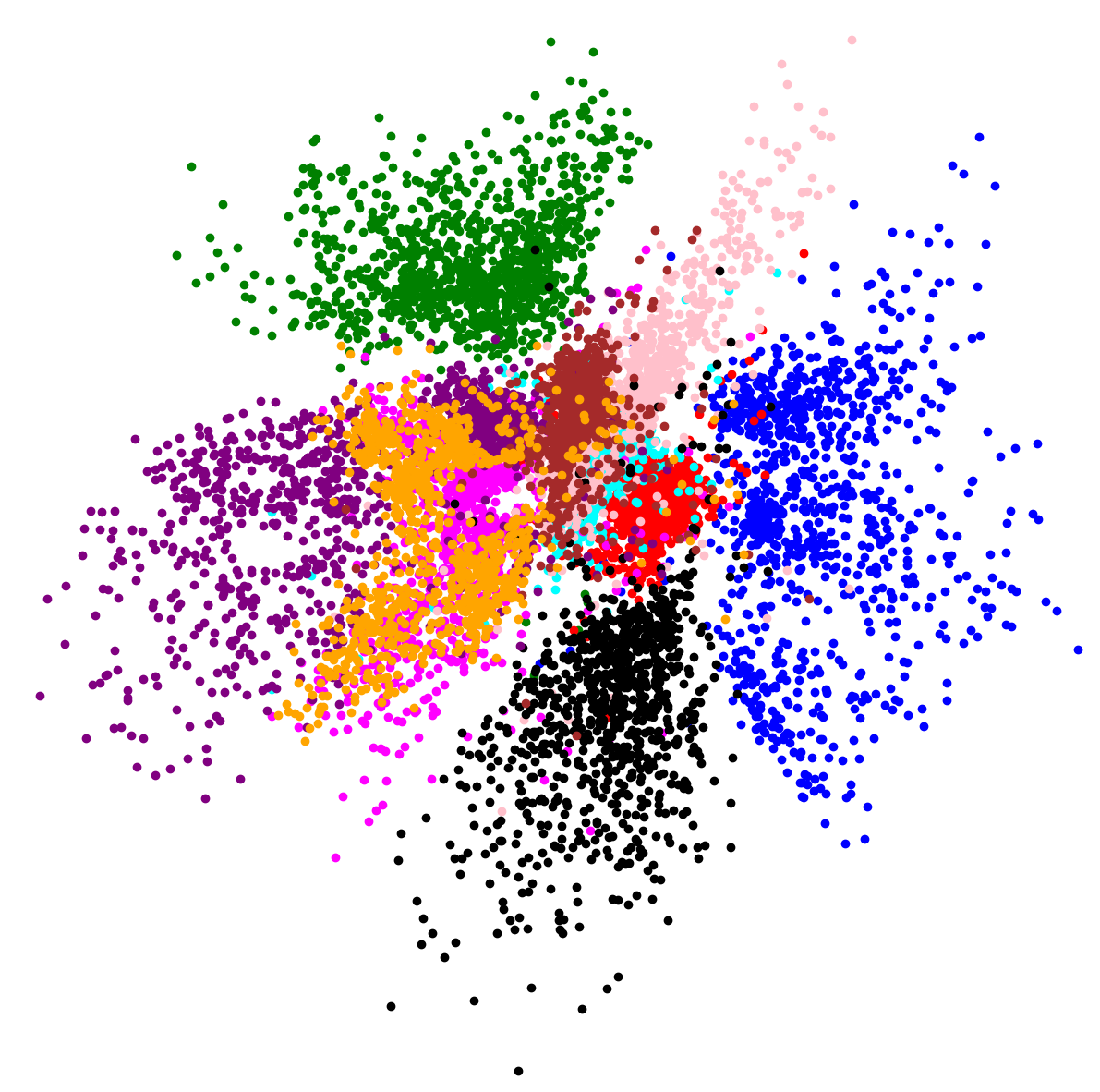} \label{fig:latent_n_mnist}}
     \hspace{5em}
      \subfigure[Hammer projection of $\mathcal{S}^2$ latent space of the $\mathcal{S}$-VAE.]{\includegraphics[width=0.52\textwidth]{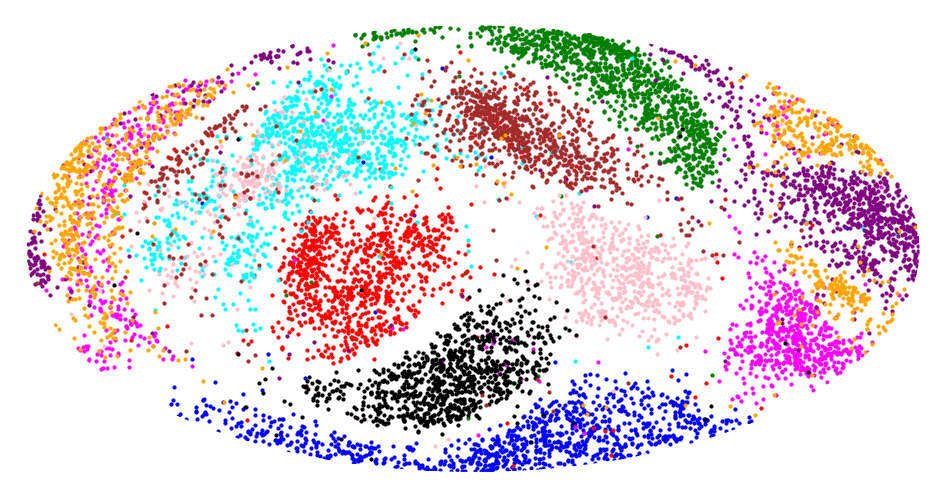}
      \label{fig:latent_s_mnist}}
      \caption{Latent space visualization of the 10 MNIST digits in 2 dimensions of both $\mathcal{N}$-VAE (left) and $\mathcal{S}$-VAE (right). (Best viewed in color)}
      \label{fig:latent_mnist_z2}
\end{figure*}

\subsection{BEHAVIOR IN HIGH DIMENSIONS}\label{subsec:high-dim}
The surface area of a hypersphere is defined as
\begin{equation}
    S(m-1) = r^m\frac{2(\pi^{m/2})}{\Gamma(m/2)}
\end{equation}
where $m$ is the dimensionality and $r$ the radius. Notice that $S(m-1) \to 0$, as $m \to \infty$. However, even for $m > 20$ we observe a \textit{vanishing surface problem} (see Figure \ref{fig:sphere-area} in Appendix \ref{app:surface}). This could thus lead to unstable behavior of hyperspherical models in high dimensions.

\section{RELATED WORK} \label{sec:related-work}

\label{par:vae-ext}
\paragraph{Extending the VAE} The majority of VAE extensions focus on increasing the flexibility of the approximate posterior. This is usually achieved through \textit{normalizing flows} \citep{normalizing-flows}, a class of invertible transformations applied sequentially to an initial reparameterizable density $q_{0}(\z_0)$, allowing for more complex posteriors. Normalizing flows can be considered orthogonal to our proposed approach. In fact, while allowing for a more flexible posterior, they do not modify the standard normal prior assumption. They could be perfectly combined with \Sv-VAEs allowing for more flexible distributions on the hypersphere.

One approach to obtain a more flexible prior is to use a simple mixture of Gaussians (MoG) prior \citep{gmm-prior-vae}. The recently introduced VampPrior model \citep{vamp-prior} outlines several advantages over the MoG and instead tries to learn a more flexible prior by expressing it as a mixture of approximate posteriors. A non-parametric prior is proposed in \citet{stick}, utilizing a truncated stick-breaking process. Opposite to these approaches, we aim at using a non-informative prior to simplify the inference.

The closest approach to ours is a VAE with a vMF distribution in the latent space used for a sentence generation task by \citep{guu2017generating}. While formally this approach is cast as a variational approach, the proposed model does not reparameterize and learn the concentration parameter $\kappa$, treating it as a constant value that remains the same for every approximate posterior instead. Critically, as indicated in Equation \ref{eq:kl}, the KL divergence term only depends on $\kappa$ therefore leaving $\kappa$ constant means never explicitly optimizing the KL divergence term in the loss. The method then only optimizes the reconstruction error by adding vMF noise to the encoder output in the latent space to still allow generation. Moreover, using a fixed global $\kappa$ for \textit{all} the approximate posteriors severely limits the flexibility and the expressiveness of the model.
 
 \begin{table*}[t]
  \centering
  \caption{Summary of results (mean and standard-deviation over 10 runs) of unsupervised model on MNIST. RE and KL correspond respectively to the reconstruction and the KL part of the ELBO. Best results are highlighted only if they passed a student t-test with $p<0.01$.}
  \bigskip
  \begin{tabular}{l|cccc|cccc}
    \toprule
    \multirow{2}{*}{\textbf{Method}} &
        \multicolumn{4}{c}{\textbf{$\mathcal{N}$-VAE}} &
        \multicolumn{4}{c}{\textbf{$\mathcal{S}$-VAE}}
        \\
        & LL & $\mathcal{L}[q]$ & $RE$ & $KL$ & LL & $\mathcal{L}[q]$ & $RE$ & $KL$ \\
    \midrule
$d=2$&-135.73{\tiny$\pm$.83}&-137.08{\tiny$\pm$.83}&-129.84{\tiny$\pm$.91}&7.24{\tiny$\pm$.11}&\textbf{-132.50}{\tiny$\pm$.73}&-133.72{\tiny$\pm$.85}&-126.43{\tiny$\pm$.91}&7.28{\tiny$\pm$.14}\\
$d=5$&-110.21{\tiny$\pm$.21}&-112.98{\tiny$\pm$.21}&-100.16{\tiny$\pm$.22}&12.82{\tiny$\pm$.11}&\textbf{-108.43}{\tiny$\pm$.09}&-111.19{\tiny$\pm$.08}&-97.84{\tiny$\pm$.13}&13.35{\tiny$\pm$.06}\\
$d=10$&-93.84{\tiny$\pm$.30}&-98.36{\tiny$\pm$.30}&-78.93{\tiny$\pm$.30}&19.44{\tiny$\pm$.14}&\textbf{-93.16}{\tiny$\pm$.31}&-97.70{\tiny$\pm$.32}&-77.03{\tiny$\pm$.39}&20.67{\tiny$\pm$.08}\\
$d=20$&-88.90{\tiny$\pm$.26}&-94.79{\tiny$\pm$.19}&-71.29{\tiny$\pm$.45}&23.50{\tiny$\pm$.31}&-89.02{\tiny$\pm$.31}&-96.15{\tiny$\pm$.32}&-67.65{\tiny$\pm$.43}&28.50{\tiny$\pm$.22}\\
$d=40$&\textbf{-88.93}{\tiny$\pm$.30}&-94.91{\tiny$\pm$.18}&-71.14{\tiny$\pm$.56}&23.77{\tiny$\pm$.49}&-90.87{\tiny$\pm$.34}&-101.26{\tiny$\pm$.33}&-67.75{\tiny$\pm$.70}&33.50{\tiny$\pm$.45}\\
    \bottomrule
  \end{tabular}
  \label{tab:mnist}
\end{table*}
 
\label{par:non-euclidean-latent-space}
\paragraph{Non-Euclidean Latent Space}
In \cite{riemann-stein}, a general model to perform Bayesian inference in Riemannian Manifolds is proposed. Following other Stein-related approaches, the method does not explicitly define a posterior density but approximates it with a number of particles. Despite its generality and flexibility, it requires the choice of a kernel on the manifold and multiple particles to have a good approximation of the posterior distribution. The former is not necessarily straightforward, while the latter quickly becomes computationally unfeasible. 

Another approach by \cite{nickel2017poincare}, capitalizes on the hierarchical structure present in some data types. By learning the embeddings for a graph in a non-euclidean negative curvature hyperbolical space, they show this topology has clear advantages over embedding these objects in a Euclidean space. Although they did not use a VAE-based approach, that is, they did not build a probabilistic generative model of the data interpreting the embeddings as latent variables, this approach shows the merit of explicitly adjusting the choice of latent topology to the data used.

\label{par:hyperspherical-perspective}
\paragraph{A Hyperspherical Perspective}
As noted before, a distinction must be made between models dealing with the challenges of intrinsically hyperspherical data like omnidirectional video, and those attempting to exploit some latent hyperspherical manifold. A recent example of the first can be found in \cite{s.2018spherical}, where \textit{spherical} CNNs are introduced. While flattening a spherical image produces unavoidable distortions, the newly defined convolutions take into account its geometrical properties. 

The most general implementation of the second model type was proposed by \cite{gopal2014mises}, who introduced a suite of models to improve cluster performance of high-dimensional data based on mixture of vMF distributions. They showed that reducing an object representation to its directional components increases clusterability over standard methods like $K$-Means or Latent Dirichlet Allocation \citep{blei2003latent}.

Specific applications of the vMF can be further found ranging from computer vision, where it is used to infer structure from motion \citep{guan2017structure} in spherical video, or structure from texture \citep{wilson2014spherical-texture}, to natural language processing, where it is utilized in text analysis \citep{banerjee2003generative-text, banerjee2005clustering} and topic modeling \citep{banerjee2007topic, reisinger:icml10}. 

Additionally, modeling data by restricting it to a hypersphere provides some natural regularizing properties as noted in \citep{liu-nips17-hypersphere-cnn}. Finally \citet{aytekin2018clustering} show on a variety of deep auto-encoder models that adding L2 normalization to the latent space during training, i.e. forcing the latent space on a hypersphere, improves clusterability.

\section{EXPERIMENTS} \label{sec:exp}

In this section, we first perform a series of experiments to investigate the theoretical properties of the proposed \Sv-VAE compared to the \Nv-VAE. In a second experiment, we show how \Sv-VAEs can be used in semi-supervised tasks to create a better separable latent representation to enhance classification. In the last experiment, we show that the \Sv-VAE indeed presents a promising alternative to \Nv-VAEs for data with a non-Euclidean latent representation of low dimensionality, on a link prediction task for three citation networks. All architecture and hyperparameter details are given in Appendix \ref{sect:appDetails}.

\subsection{RECOVERING HYPERSPHERICAL LATENT REPRESENTATIONS} \label{subsec:exp-recover-latent}

In this first experiment we build on the motivation developed in Subsection \ref{subsec:beyond-hyperplanes}, by confirming with a synthetic data example the difference in behavior of the \Nv-VAE and \Sv-VAE in recovering latent hyperspheres. We first generate samples from a mixture of three vMFs on the circle, $\mathcal{S}^1$, as shown in Figure \ref{fig:toy1_orig}, which subsequently are mapped into the higher dimensional $\mathbb{R}^{100}$ by applying a noisy, non-linear transformation. After this, we in turn train an auto-encoder, a \Nv-VAE, and a \Sv-VAE. We further investigate the behavior of the \Nv-VAE, by training a model using a scaled down KL divergence.

\begin{table*}[!ht]
  \centering
  \caption{Summary of results (mean accuracy and standard-deviation over 20 runs) of semi-supervised $K$-NN on MNIST. Best results are highlighted only if they passed a student t-test with $p<0.01$.}
  \bigskip
  \begin{tabular}{l|cc|cc|cc}
    \toprule
    \multirow{2}{*}{\textbf{Method}} &
    \multicolumn{2}{c}{\textbf{100}} &
    \multicolumn{2}{c}{\textbf{600}} &
    \multicolumn{2}{c}{\textbf{1000}}
        \\
        & $\mathcal{N}$-VAE & $\mathcal{S}$-VAE & $\mathcal{N}$-VAE & $\mathcal{S}$-VAE & $\mathcal{N}$-VAE & $\mathcal{S}$-VAE \\
    \midrule
$d=2$&72.6{\tiny$\pm$2.1}&\textbf{77.9}{\tiny$\pm$1.6}&80.8{\tiny$\pm$0.5}&\textbf{84.9}{\tiny$\pm$0.6}&81.7{\tiny$\pm$0.5}&\textbf{85.6}{\tiny$\pm$0.5}\\
$d=5$&81.8{\tiny$\pm$2.0}&\textbf{87.5}{\tiny$\pm$1.0}&90.9{\tiny$\pm$0.4}&\textbf{92.8}{\tiny$\pm$0.3}&92.0{\tiny$\pm$0.2}&\textbf{93.4}{\tiny$\pm$0.2}\\
$d=10$&75.7{\tiny$\pm$1.8}&\textbf{80.6}{\tiny$\pm$1.3}&88.4{\tiny$\pm$0.5}&\textbf{91.2}{\tiny$\pm$0.4}&90.2{\tiny$\pm$0.4}&\textbf{92.8}{\tiny$\pm$0.3}\\
$d=20$&71.3{\tiny$\pm$1.9}&\textbf{72.8}{\tiny$\pm$1.6}&88.3{\tiny$\pm$0.5}&\textbf{89.1}{\tiny$\pm$0.6}&90.1{\tiny$\pm$0.4}&\textbf{91.1}{\tiny$\pm$0.3}\\
$d=40$&\textbf{72.3}{\tiny$\pm$1.6}&67.7{\tiny$\pm$2.3}&88.0{\tiny$\pm$0.5}&87.4{\tiny$\pm$0.7}&90.3{\tiny$\pm$0.5}&90.4{\tiny$\pm$0.4}\\
    \bottomrule
    \end{tabular}
  \label{tab:m1_k5_a}
\end{table*}

\paragraph{Results}
The resulting latent spaces, displayed in Figure \ref{fig:exp-toy1}, clearly confirm the intuition built in Subsection \ref{subsec:beyond-hyperplanes}. As expected, in Figure \ref{fig:toy1_AE} the auto-encoder is perfectly capable to embed in low dimensions the original underlying data structure. However, most parts of the latent space are not occupied by points, critically affecting the ability to generate meaningful samples.

In the \Nv-VAE setting we observe two types of behaviours, summarized by Figures \ref{fig:toy1_VAE} and  \ref{fig:toy1_VAE2}. In the first we observe that if the prior is too strong it will force the posterior to match the prior shape, concentrating the samples in the center. However, this prevents the \Nv-VAE to correctly represent the true shape of the data and creates instability problems for the decoder around the origin. On the contrary, if we scale down the KL term, we observe that the samples from the approximate posterior maintain a shape that reflects the $\mathcal{S}^1$ structure smoothed with Gaussian noise. However, as the approximate posterior differs strongly from the prior, obtaining meaningful samples from the latent space again becomes problematic. 

The \Sv-VAE on the other hand, almost perfectly recovers the original dataset structure, while the samples from the approximate posterior closely match the prior distribution. This simple experiment confirms the intuition that having a prior that matches the true latent structure of the data, is crucial in constructing a correct latent representation that preserves the ability to generate meaningful samples. 

\subsection{EVALUATION OF EXPRESSIVENESS} \label{subsec:mnist}
To compare the behavior of the $\mathcal{N}$-VAE and $\mathcal{S}$-VAE on a data set that does not have a clear hyperspherical latent structure, we evaluate both models on a reconstruction task using dynamically binarized MNIST \citep{salakhutdinov2008quantitative}. We analyze the ELBO, KL, negative reconstruction error, and marginal log-likelihood (LL) for both models on the test set. The LL is estimated using importance sampling with 500 sample points \citep{burda2015importance}.

\paragraph{Results}
Results are shown in Table \ref{tab:mnist}. We first note that in terms of negative reconstruction error the \Sv-VAE outperforms the \Nv-VAE in all dimensions. Since the \Sv-VAE uses a uniform prior, the KL divergence increases more strongly with dimensionality, which results in a higher ELBO. However in terms of log-likelihood (LL) the \Sv-VAE clearly has an edge in low dimensions ($d = 2,5,10$) and performs comparable to the \Nv-VAE in $d=20$. This empirically confirms the hypothesis of Subsection \ref{subsec:lim-gaussian-prior}, showing the positive effect of having a uniform prior in low dimensions. In the absence of any origin pull, the data is able to cluster naturally, utilizing the entire latent space which can be observed in Figure \ref{fig:latent_mnist_z2}. Note that in Figure \ref{fig:latent_n_mnist} all mass is concentrated around the center, since the prior mean is zero. Conversely, in Figure \ref{fig:latent_s_mnist} all available space is evenly covered due to the uniform prior, resulting in more separable clusters in $\mathcal{S}^2$ compared to $\mathbb{R}^2$. However, as dimensionality increases, the Gaussian distribution starts to approximate a hypersphere, while its posterior becomes more expressive than the vMF due to the higher number of variance parameters. Simultaneously, as described in Subsection \ref{subsec:high-dim}, the surface area of the vMF starts to collapse limiting the available space.

In Figure \ref{fig:nvae_samples} and \ref{fig:svae_samples} of Appendix \ref{app:mnist-latent-visual}, we present randomly generated samples from the \Nv-VAE and the \Sv-VAE, respectively. Moreover, in Figure \ref{fig:latent_spaces_appendix} of Appendix \ref{app:mnist-latent-visual}, we show 2-dimensional manifolds for the two models. Interestingly, the manifold given by the \Sv-VAE indeed results in a latent space where digits occupy the entire space and there is a sense of continuity from left to right.

\begin{figure*}[h!]
\centering
     \subfigure[$\mathbb{R}^2$ latent space of the \Nv-VGAE.]{\includegraphics[width=0.32\textwidth]{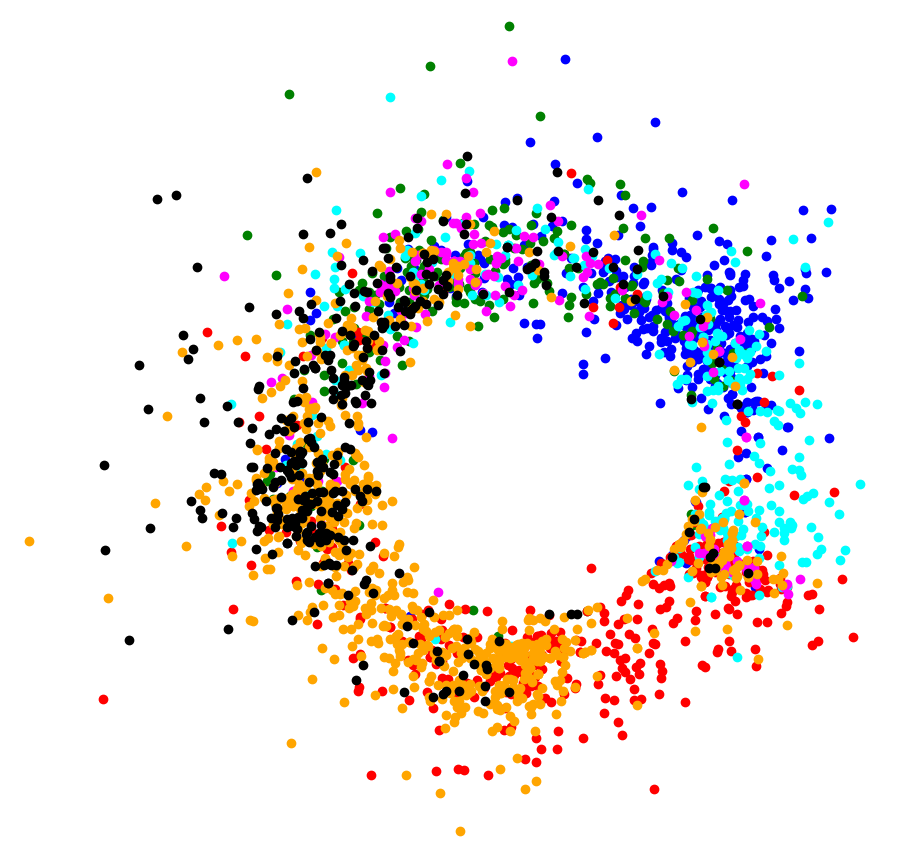} \label{fig:latent_n_cora}}
     \hspace{5em}
      \subfigure[Hammer projection of $\mathcal{S}^2$ latent space of the \Sv-VGAE.]{\includegraphics[width=0.52\textwidth]{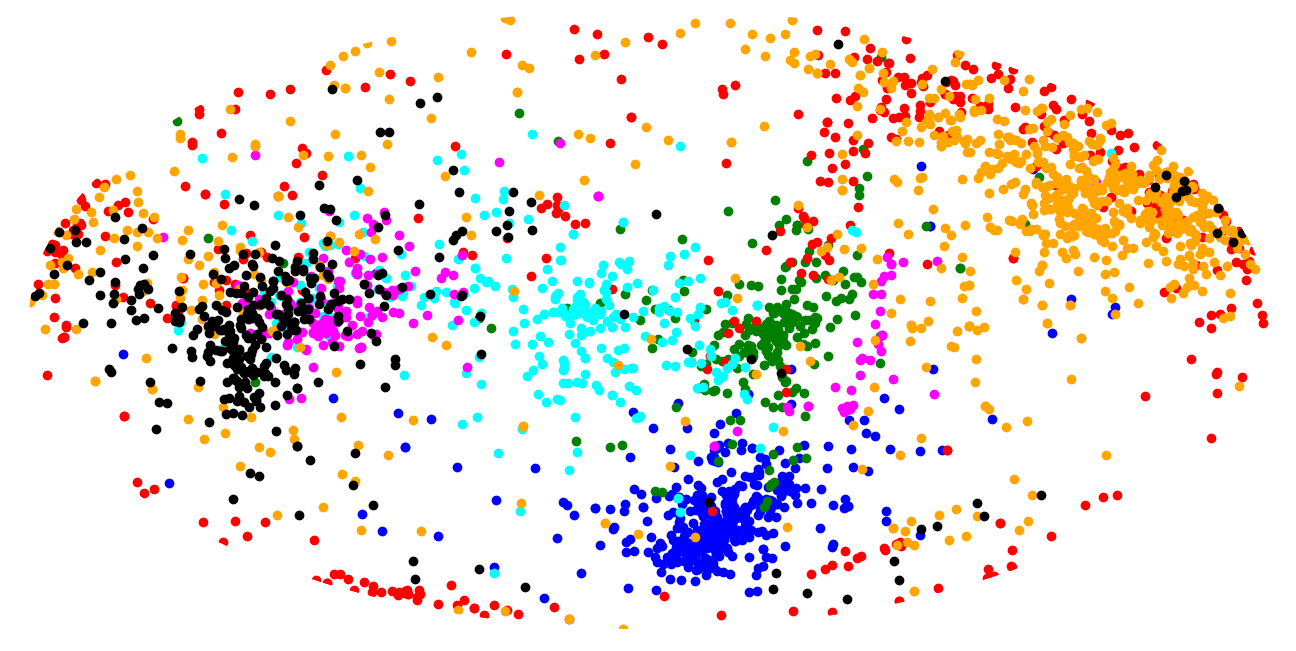}\label{fig:latent_s_cora}}
      \caption{Latent space of unsupervised \Nv-VGAE and \Sv-VGAE models trained on Cora citation network. Colors denote documents classes which are not provided during training. (Best viewed in color)}
\end{figure*}

\subsection{SEMI-SUPERVISED LEARNING} \label{subsec:m1}

Having observed the \Sv-VAE's ability to increase clusterability of data points in the latent space, we wish to further investigate this property using a semi-supervised classification task. For this purpose we re-implemented the M1 and M1+M2 models as described in \citep{kingma-semi-super}, and evaluate the classification accuracy of the \Sv-VAE and the \Nv-VAE on dynamically binarized MNIST. In the M1 model, a classifier utilizes the latent features obtained using a VAE as in experiment \ref{subsec:mnist}. The M1+M2 model is constructed by stacking the M2 model on top of M1, where M2 is the result of augmenting the VAE by introducing a partially observed variable $\y$, and combining the ELBO and classification objective. This concatenated model is trained end-to-end \footnote{It is worth noting that in the original implementation by \citet{kingma-semi-super} the stacked model did not converge well using end-to-end training, and used the extracted features of the M1 model as inputs for the M2 model instead.}. 

This last model also allows for a combination of the two topologies due to the presence of two distinct latent variables, $\z_1$ and $\z_2$. Since in the M2 latent space the class assignment is expressed by the variable $\y$, while $\z_2$ only needs to capture the style, it naturally follows that the \Nv-VAE is more suited for this objective due to its higher number of variance parameters. Hence, besides comparing the \Sv-VAE against the \Nv-VAE, we additionally run experiments for the M1+M2 model by modeling $\z_1$, $\z_2$ respectively with a vMF and normal distribution.

\paragraph{Results}

As can be see in Table \ref{tab:m1_k5_a}, for M1 the \Sv-VAE outperforms the \Nv-VAE in all dimensions up to $d=40$. This result is amplified for a low number of observed labels. Note that for both models absolute performance drops as the dimensionality increases, since $K$-NN used as the classifier suffers from the curse of dimensionality. Besides reconfirming superiority of the \Sv-VAE in $d < 20$, its better performance than the \Nv-VAE for $d=20$ was unexpected. This indicates that although the log-likelihood might be comparable(see Table \ref{tab:mnist}) for higher dimensions, the \Sv-VAE latent space better captures the cluster structure.

In the concatenated model M1+M2, we first observe in Table \ref{tab:m1m2} that either the pure \Sv-VAE or the \Sv+\Nv-VAE model yields the best results, where the \Sv-VAE almost always outperforms the \Nv-VAE. Our hypothesis regarding the merit of a \Sv+\Nv-VAE model is further confirmed, as displayed by the stable, strong performance across all different dimensions. Furthermore, the clear edge in clusterability of the \Sv-VAE in low dimensional $\z_1$ as already observed in Table \ref{tab:m1_k5_a}, is again evident. As the dimensionality of $\z_1, \z_2$ increases, the accuracy of the \Nv-VAE improves, reducing the performance gap with the \Sv-VAE. As previously noticed the \Sv-VAE performance drops when $dim_{\ \z_2} = 50$, with the best result being obtained for $dim_{\ \z_1} = dim_{\ \z_2} = 10$. In fact, it is worth noting that for this setting the \Sv-VAE obtains comparable results to the original settings of \citep{kingma-semi-super}, while needing a considerably smaller latent space. Finally, the end-to-end trained \Sv+\Nv-VAE model is able to reach a significantly higher classification accuracy than the original results reported by \citet{kingma-semi-super}, 96.7{\tiny$\pm.1$}.

The M1+M2 model allows for conditional generation. Similarly to \citep{kingma-semi-super}, we set the latent variable $\z_2$ to the value inferred from the test image by the inference network, and then varied the class label $\y$. In Figure \ref{fig:latent_spaces} of Appendix \ref{app:cond-visual} we notice that the model is able to disentangle the style from the class.

\begin{table}[!ht]
    \centering
    \caption{Summary of results of semi-supervised model M1+M2 on MNIST.}
    \bigskip
    \begin{tabular}{cc|ccc}
    \toprule
    \multicolumn{2}{c}{\textbf{Method}} &
      \multicolumn{3}{c}{\textbf{100}} \\
      $dim_{\ \z_1}$ & $dim_{\ \z_2}$
      & \Nv+\Nv & \Sv+\Sv & \Sv+\Nv  \\
        \midrule
    \multirow{3}{*}{5}  & 5   & 90.0{\tiny$\pm$.4} & {\bf94.0}{\tiny$\pm$.1} & 93.8{\tiny$\pm$.1} \\
                        & 10  & 90.7{\tiny$\pm$.3} & 94.1{\tiny$\pm$.1} & {\bf94.8}{\tiny$\pm$.2} \\
                        & 50  & 90.7{\tiny$\pm$.1} & 92.7{\tiny$\pm$.2} & {\bf93.0}{\tiny$\pm$.1} \\
    \midrule
    \multirow{3}{*}{10} & 5   & 90.7{\tiny$\pm$.3} & 91.7{\tiny$\pm$.5} & {\bf94.0}{\tiny$\pm$.4} \\
                        & 10  & 92.2{\tiny$\pm$.1} & {\bf96.0}{\tiny$\pm$.2} & {\bf95.9}{\tiny$\pm$.3}\\
                        & 50  & 92.9{\tiny$\pm$.4} & 95.1{\tiny$\pm$.2} & {\bf95.7}{\tiny$\pm$.1} \\
    \midrule
    \multirow{3}{*}{50} & 5   & 92.0{\tiny$\pm$.2} & 91.7{\tiny$\pm$.4} & {\bf95.8}{\tiny$\pm$.1} \\
                        & 10  & 93.0{\tiny$\pm$.1} & 95.8{\tiny$\pm$.1} & {\bf97.1}{\tiny$\pm$.1} \\
                        & 50  & 93.2{\tiny$\pm$.2} & 94.2{\tiny$\pm$.1} & {\bf97.4}{\tiny$\pm$.1} \\
    \bottomrule
    \end{tabular}
    \label{tab:m1m2}
\end{table}

\subsection{LINK PREDICTION ON GRAPHS}

In this experiment, we aim at demonstrating the ability of the \Sv-VAE to learn meaningful embeddings of nodes in a graph, showing the advantages of embedding objects in a non-Euclidean space. We test hyperspherical reparameterization on the recently introduced Variational Graph Auto-Encoder (VGAE) \citep{kipf2016VGAE}, a VAE model for graph-structured data. We perform training on a link prediction task on three popular citation network datasets \citep{sen2008collective}: Cora, Citeseer and Pubmed.

Dataset statistics and further experimental details are summarized in Appendix \ref{ap:link-pred}. The models are trained in an unsupervised fashion on a masked version of these datasets where some of the links have been removed. All node features are provided and efficacy is measured in terms of average precision (AP) and area under the ROC curve (AUC) on a test set of previously removed links. We use the same training, validation, and test splits as in \cite{kipf2016VGAE}, i.e. we assign 5\% of links for validation and 10\% of links for testing.

\begin{table}[H]
  \centering
    \caption{Results for link prediction in citation networks.}
    \bigskip
    \begin{tabular}{ll|cccc}
    \toprule
      \textbf{Method}  &  & \textbf{\Nv-VGAE} & \textbf{\Sv-VGAE} \\
    \midrule
        \multirow{2}{*}{\textbf{Cora}} & AUC & 92.7{\tiny$\pm$.2} & \textbf{94.1}{\tiny$\pm$.1} \\
        & AP &  93.2{\tiny$\pm$.4} & \textbf{94.1}{\tiny$\pm$.3} \\
    \midrule
        \multirow{2}{*}{\textbf{Citeseer}}& AUC & 90.3{\tiny$\pm$.5} & \textbf{94.7}{\tiny$\pm$.2} \\
        & AP & 91.5{\tiny$\pm$.5} & \textbf{95.2}{\tiny$\pm$.2} \\
    \midrule
    \multirow{2}{*}{\textbf{Pubmed}}& AUC & \textbf{97.1}{\tiny$\pm$.0} & 96.0{\tiny$\pm$.1} \\
    & AP & \textbf{97.1}{\tiny$\pm$.0} & 96.0{\tiny$\pm$.1} \\
    \bottomrule
    \end{tabular}
  \label{tab:graph}
\end{table}

\paragraph{Results} In Table \ref{tab:graph}, we show that our model outperforms the \Nv-VGAE baseline on two out of the three datasets by a significant margin. The log-probability of a link is computed as the dot product of two embeddings. In a hypersphere, this can be interpreted as the cosine similarity between vectors. Indeed we find that the choice of a dot product scoring function for link prediction is problematic in combination with the normal distribution on the latent space. If embeddings are close to the zero-center, noise during training can have a large destabilizing effect on the angle information between two embeddings. In practice, the model finds a solution where embeddings are "pushed" away from the zero-center, as demonstrated in Figure \ref{fig:latent_n_cora}. This counteracts the pull towards the center arising from the standard prior and can overall lead to poor modeling performance. By constraining the embeddings to the surface of a hypersphere, this effect is mitigated, and the model can find a good separation of the latent clusters, as shown in Figure \ref{fig:latent_s_cora}. 

On Pubmed, we observe that the \Sv-VAE converges to a lower score than the \Nv-VAE. The Pubmed dataset is significantly larger than Cora and Citeseer, and hence more complex. The \Nv-VAE has a larger number of variance parameters for the posterior distribution, which might have played an important role in better modeling the relationships between nodes. We further hypothesize that not all graphs are necessarily better embedded in a hyperspherical space and that this depends on some fundamental topological properties of the graph. For instance, the already mentioned work from \citet{nickel2017poincare} shows that hyperbolical space is better suited for graphs with a hierarchical, tree-like structure. These considerations prefigure an interesting research direction that will be explored in future work.

\section{CONCLUSION}

With the \Sv-VAE we set an important first step in the exploration of hyperspherical latent representations for variational auto-encoders. Through various experiments, we have shown that \Sv-VAEs have a clear advantage over \Nv-VAEs for data residing on a known hyperspherical manifold, and are competitive or surpass \Nv-VAEs for data with a non-obvious hyperspherical latent representation in lower dimensions. Specifically, we demonstrated \Sv-VAEs improve separability in semi-supervised classification and that they are able to improve results on state-of-the-art link prediction models on citation graphs, by merely changing the prior and posterior distributions as a simple drop-in replacement.

We believe that the presented research paves the way for various promising areas of future work, such as exploring more flexible approximate posterior distributions through normalizing flows on the hypersphere, or hierarchical mixture models combining hyperspherical and hyperplanar space. Further research should be done in increasing the performance of \Sv-VAEs in higher dimensions; one possible solution of which could be to dynamically learn the radius of the latent hypersphere in a full Bayesian setting.

% Acknowledgements should only appear in the accepted version.
\section*{Acknowledgements}
We would like to thank Rianne van den Berg, Jonas K\"ohler, Pim de Haan, Taco Cohen, Marco Federici, and Max Welling for insightful discussions. T.K.~is supported by the SAP Innovation Center Network. J.M.T.~was funded by the European Commission within the Marie Sk\l odowska-Curie Individual Fellowship (Grant No. 702666, ”Deep learning and Bayesian inference for medical imaging”).

\bibliography{main}
\bibliographystyle{apalike} 
%plain if shorter needed.

\onecolumn

\appendix
\section{SAMPLING PROCEDURE} \label{app:vmf-sample}

\begin{figure}[H]\label{fig:sampling}
  \centering
  \includegraphics[width=0.85\textwidth]{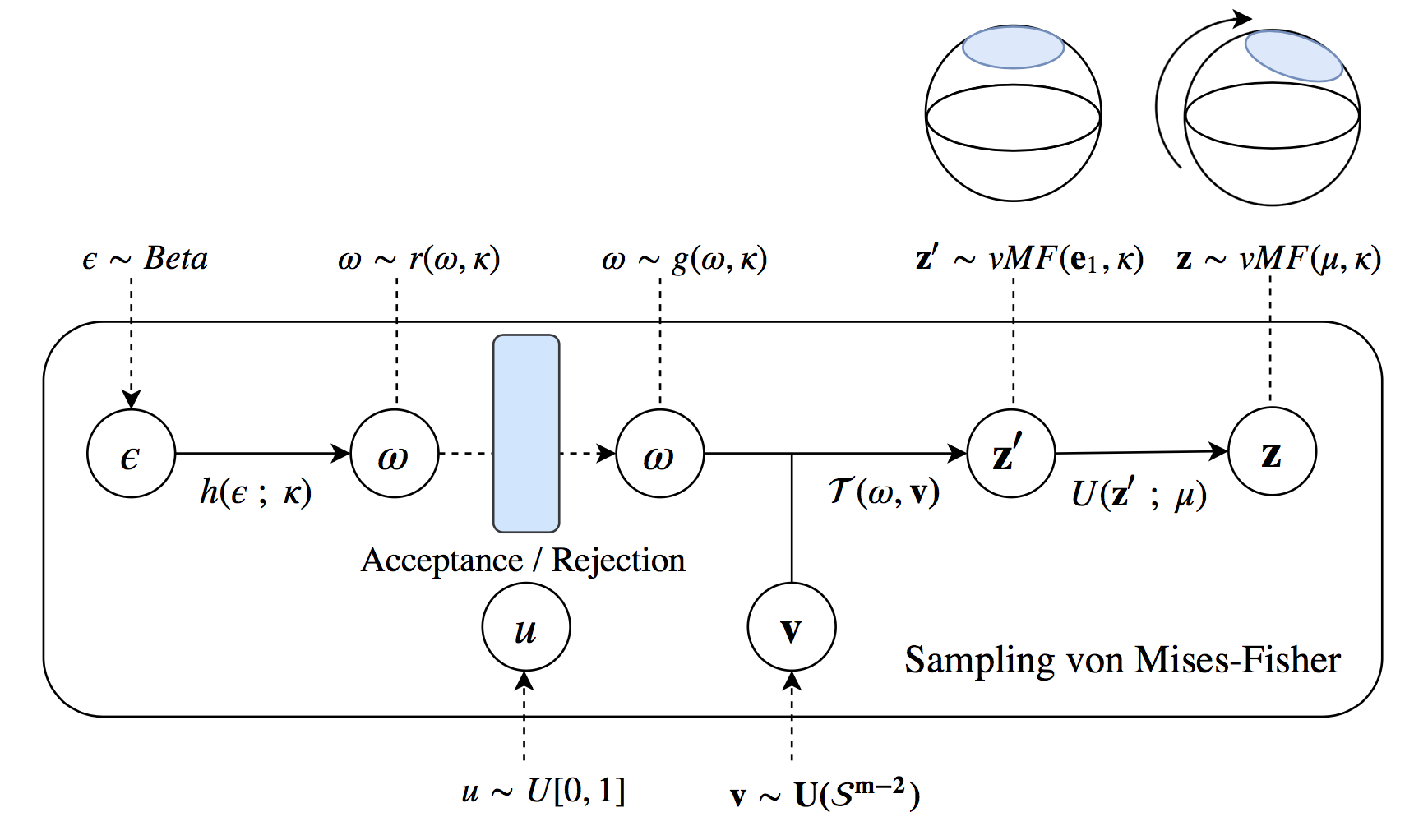}
  \caption{Overview of von Mises-Fisher sampling procedure. Note that as $\omega$ is a scalar, the procedure does not suffer from the curse of dimensionality.}
\end{figure}
 
The general algorithm for sampling from a vMF has been outlined in Algorithm \ref{alg:vmf-sample}. 
The exact form of the distribution of the univariate distribution $g(\omega| k)$ is:
\begin{align}
    g(\omega| k) = \dfrac{2(\pi^{m/2})}{\Gamma(m/2)} \mathcal{C}_m(k) \dfrac{\exp(\omega k)(1-\omega^2)^{\frac{1}{2}(m-3)}}{B(\frac{1}{2}, \frac{1}{2}(m-1))},
\end{align}
Samples from this distribution are drawn using an acceptance/rejection algorithm when $m \neq 3$. The complete procedure is described in Algorithm \ref{alg:g-sample}. The $Householder$ reflection (see Algorithm \ref{alg:householder-transform} for details) simply finds an orthonormal transformation that maps the modal vector $\e_1 = (1,0,\cdots,0)$ to $\mu$. Since an orthonormal transformation preserves the distances all the points in the hypersphere will stay in the surface after mapping. Notice that even the transform $U\z' =  (\mathbb{I} - 2\uv  \uv^{\top})\z'$, can be executed in $\mathcal{O}(m)$ by rearranging the terms. 

\begin{figure}[H]\label{fig:sphere-sample-example}
  \centering
  \includegraphics[width=0.49\textwidth]{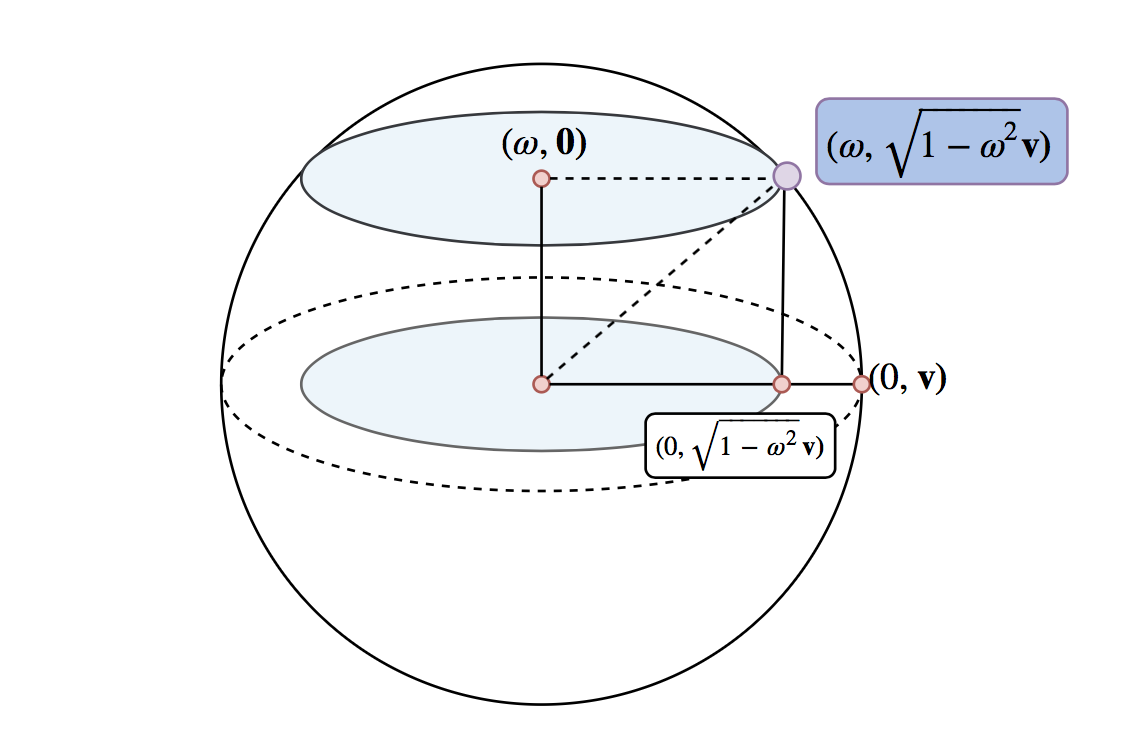}
  \caption{Geometric representation of a single sample in $\mathcal{S}^2$, where $\omega \sim g(\omega|k)$ and $\vv \sim U(\mathcal{S}^{1})$.}
\end{figure}

\begin{minipage}[t]{8cm}
  \vspace{0pt}  
\begin{algorithm}[H]
    \centering
   \caption{$g(\omega| k)$ acceptance-rejection sampling}
   \label{alg:g-sample}
\begin{algorithmic}
    \STATE {\bfseries Input:} dimension $m$, concentration $\kappa$
    \STATE Initialize values:
    \STATE $b \gets \dfrac{-2k + \sqrt{4k^2 + (m-1)^2}}{m - 1}$\\
    \STATE $a \gets \dfrac{(m-1) + 2k +\sqrt{4k^2+(m-1)^2}}{4}$\\
    \STATE $d \gets \dfrac{4ab}{(1+b)} - (m - 1)\ln (m-1)$
    \REPEAT
    \STATE Sample $\varepsilon \sim \text{Beta}(\frac{1}{2}(m-1), \frac{1}{2}(m-1))$
    \STATE $\omega \gets h(\varepsilon, k) = \dfrac{1 - (1 + b)\varepsilon}{1 - (1 - b)\varepsilon}$
    \STATE $t \gets \dfrac{2ab}{1 - (1-b)\varepsilon}$
    \STATE Sample $u \sim \mathcal{U}(0,1)$
    \UNTIL{$\quad (m-1)\ln(t) - t + d \ge \ln(u)$}
    \STATE{\bfseries Return:}{$\ \omega$}
\end{algorithmic}
\end{algorithm}
\end{minipage}%
\hspace{2 cm}
\begin{minipage}[t]{6cm}
  \vspace{0pt}
\begin{algorithm}[H]
   \caption{Householder transform}
   \label{alg:householder-transform}
\begin{algorithmic}
    \STATE {\bfseries Input:} mean $\mu$, modal vector $\e_1$
    \STATE $\uv' \gets \e_1 - \mu$
    \STATE $\uv \gets \frac{\uv'}{||\uv'||}$
    \STATE $U \gets \mathbb{I} - 2\uv \uv^{\top}$
    \STATE{\bfseries Return:}{$\ U$}
\end{algorithmic}
\end{algorithm}
\end{minipage}

\begin{table}[H]
  \centering
    \caption{Expected number of samples needed before acceptance, computed using Monte Carlo estimate with 1000 samples varying dimensionality and concentration parameters. Notice that the sampling complexity increases in $\kappa$, but decreases as the dimensionality, $d$, increases.}
    \bigskip
    \begin{tabular}{l|ccccccccc}
    \toprule
    & $\kappa=1$ & $\kappa=5$ & $\kappa=10$ & $\kappa=50$ & $\kappa=100$ & $\kappa=500$ & $\kappa=1000$ & $\kappa=5000$ & $\kappa=10000$ \\
    \midrule
$d=5$ & 1.020 & 1.171 & 1.268 & 1.398 & 1.397 & 1.426 & 1.458 & 1.416 & 1.440\\
$d=10$ & 1.008 & 1.094 & 1.154 & 1.352 & 1.411 & 1.407 & 1.369 & 1.402 & 1.419\\
$d=20$ & 1.001 & 1.031 & 1.085 & 1.305 & 1.342 & 1.367 & 1.409 & 1.410 & 1.407\\
$d=40$ & 1.000 & 1.011 & 1.027 & 1.187 & 1.288 & 1.397 & 1.433 & 1.402 & 1.423\\
$d=100$ & 1.000 & 1.000 & 1.006 & 1.092 & 1.163 & 1.317 & 1.360 & 1.398 & 1.416\\
    \bottomrule
    \end{tabular}
\end{table}

\section{KL DIVERGENCE DERIVATION}\label{ap:kl-divergence}
% TO CLEAN

The KL divergence between a von-Mises-Fisher distribution $q(\z| \mu, k)$ and an uniform distribution in the hypersphere (one divided by the surface area of $\mathcal{S}^{m-1}$) $p(\x) = \left( \dfrac{2(\pi^{m/2})}{\Gamma(m/2)} \right)^{-1}$ is:
\begin{align}
    \mathcal{KL}[q(\z|\mu, k) \; || \;p(\z)] &= \int_{\mathcal{S}^{m-1}} q(\z|\mu, k) \log \frac{q(\z|\mu, k)}{p(\z)} d\z \\
    &= \int_{\mathcal{S}^{m-1}} q(\z|\mu, k) \left( \log \mathcal{C}_m (k) + k \mathbf{\mu}^T \z - \log p(\z) \right) d\z 
    \\
    &= k \mu\; \mathbb{E}_q[\z] + \log \mathcal{C}_m(k) - \log \left( \frac{2(\pi^{m/2})}{\Gamma(m/2)} \right)^{-1} \\
    &= k \frac{\mathcal{I}_{m/2}(k)}{\mathcal{I}_{m/2 - 1}(k)} + \left( (m/2 - 1) \log k - (m/2) \log(2\pi) - \log \mathcal{I}_{m/2 - 1}(k)  \right) \\ &\quad + \frac{m}{2} \log \pi + \log 2 - \log  \Gamma(\frac{m}{2}), \nonumber
\end{align}

\subsection{GRADIENT OF KL DIVERGENCE}
Using
\begin{align}
\nabla_k \mathcal{I}_{v}(k) = \frac12 \left(\mathcal{I}_{v-1}(k) + \mathcal{I}_{v+1}(k) \right),
\end{align}
and
\begin{align}
\nabla_k \log \mathcal{C}_m(k) &=
\nabla_k \left( (m/2 - 1) \log k - (m/2) \log(2\pi) - \log \mathcal{I}_{m/2 - 1}(k)  \right) \\
&= - \frac{\mathcal{I}_{m/2}(k)}{\mathcal{I}_{m/2 - 1}(k)},
\end{align}
then
\begin{align}
    \nabla_\kappa \mathcal{KL}[q(\z|\mu, k) \; || \;p(\z)] &= \nabla_\kappa k \frac{\mathcal{I}_{m/2}(k)}{\mathcal{I}_{m/2 - 1}(k)} + \nabla_k \log \mathcal{C}_m(k)   \\
    & = \frac{\mathcal{I}_{m/2}(k)}{\mathcal{I}_{m/2 - 1}(k)} + k \nabla_k  \frac{\mathcal{I}_{m/2}(k)}{\mathcal{I}_{m/2 - 1}(k)} - \frac{\mathcal{I}_{m/2}(k)}{\mathcal{I}_{m/2 - 1}(k)}  \\
    &= \frac12 k \left( \frac{\mathcal{I}_{m/2+1}(k)}{\mathcal{I}_{m/2 - 1}(k)} - \frac{\mathcal{I}_{m/2}(k) \left( \mathcal{I}_{m/2 - 2}(k) + \mathcal{I}_{m/2}(k) \right) }{\mathcal{I}_{m/2 - 1}(k)^2} + 1 \right),
\end{align}
Notice that we can use $\mathcal{I}_{m/2}^{exp} = \exp(-k)\mathcal{I}_{m/2}$ for numerical stability.

\section{PROOF OF LEMMA \ref{our-repar} } \label{app:reparameterization-trick}
\begin{lem}[2]
Let $f$ be any measurable function and $\varepsilon \sim \pi_1(\varepsilon| \theta) = s(\varepsilon)\dfrac{g(h(\varepsilon,\theta)| \theta)}{r(h(\varepsilon,\theta)| \theta)}$ the distribution of the accepted sample. 
Also let $\vv \sim\pi_2(\vv)$, and $\mathcal{T}$ a transformation that depends on the parameters such that if $\z = \mathcal{T}(\omega, \vv; \theta)$ with $\omega\sim g(\omega|\theta)$, then $ \z \sim q(\z| \theta) $:
\begin{align}
    \E_{ (\varepsilon,\vv)\sim\pi_1(\varepsilon| \theta)\pi_2(\vv)} \left[f\left(\mathcal{T}(h(\varepsilon, \theta), \vv; \theta)\right)\right] = \int f(\z)q(\z|\theta)d\z = \E_{q(\z|\theta)}[f(\z)],
\end{align}
\end{lem}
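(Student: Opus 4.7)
The plan is to reduce everything to Lemma \ref{normal-repar} by conditioning on $\vv$ and then invoking the pushforward property of $\mathcal{T}$. Because $\varepsilon$ and $\vv$ are drawn independently, the joint density is the product $\pi_1(\varepsilon|\theta)\pi_2(\vv)$, so Fubini's theorem lets me rewrite the left-hand expectation as an iterated integral: outer in $\vv$, inner in $\varepsilon$.

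For the inner integral I would fix $\vv$ and $\theta$ and define the auxiliary measurable function $\tilde{f}_{\vv,\theta}(\omega) := f(\mathcal{T}(\omega,\vv;\theta))$. Since this depends on the single variable $\omega$ with all other arguments held fixed, it is a legitimate test function for Lemma \ref{normal-repar}, and applying that lemma gives
$$\int \tilde{f}_{\vv,\theta}\bigl(h(\varepsilon,\theta)\bigr)\,\pi_1(\varepsilon|\theta)\,d\varepsilon \;=\; \int \tilde{f}_{\vv,\theta}(\omega)\,g(\omega|\theta)\,d\omega.$$
Substituting back, re-integrating in $\vv$, and applying Fubini in the opposite direction, the left-hand side of the statement collapses to $\E_{(\omega,\vv)\sim g(\omega|\theta)\pi_2(\vv)}\bigl[f(\mathcal{T}(\omega,\vv;\theta))\bigr]$.

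The concluding step invokes the standing hypothesis on $\mathcal{T}$: when $\omega \sim g(\omega|\theta)$ and $\vv \sim \pi_2(\vv)$ are drawn independently, the variable $\z = \mathcal{T}(\omega,\vv;\theta)$ is distributed as $q(\z|\theta)$. In other words, $\mathcal{T}(\,\cdot\,,\,\cdot\,;\theta)$ pushes the product measure $g(\omega|\theta)\pi_2(\vv)\,d\omega\,d\vv$ forward to $q(\z|\theta)\,d\z$, and the usual change-of-variables formula for pushforward measures identifies the expectation above with $\int f(\z)\,q(\z|\theta)\,d\z$, which is the desired right-hand side.

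The one delicate point I expect to spend care on is justifying the two invocations of Fubini and the pointwise-in-$\vv$ use of Lemma \ref{normal-repar}: both require joint measurability of $(\omega,\vv)\mapsto f(\mathcal{T}(\omega,\vv;\theta))$ and suitable integrability against the product measure. In the vMF application these hold automatically, since $\mathcal{T}$ is built from the Householder reflection which is continuous in all its arguments, so the argument reduces to an essentially algebraic manipulation that applies Lemma \ref{normal-repar} under an outer $\vv$-integral; no heavier measure-theoretic machinery should be needed.
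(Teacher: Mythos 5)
Your proposal is correct and follows essentially the same route as the paper's proof: the paper likewise first applies the change of variables $\omega = h(\varepsilon,\theta)$ under the double integral (invoking the argument of the rejection-sampling reparameterization lemma, which is what your conditioning-on-$\vv$/Fubini step makes explicit) and then applies the change of variables $\z = \mathcal{T}(\omega,\vv;\theta)$ using the assumed pushforward property of $\mathcal{T}$. Your added care about measurability and Fubini is a minor refinement of the same argument, not a different approach.
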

\begin{proof}
\begin{align}
    \E_{ (\varepsilon,\vv)\sim\pi_1(\varepsilon| \theta)\pi_2(\vv)} \left[f\left(\mathcal{T}(h(\varepsilon, \theta), \vv; \theta)\right)\right] = \int\!\!\!\!\int f\left(\mathcal{T}(h(\varepsilon, \theta), \vv; \theta)\right) \pi_1(\varepsilon| \theta)\pi_2(\vv) d\varepsilon d\vv,
\end{align}
    
Using the same argument employed by \citet{rejection-repar} we can apply the change of variables $\omega = h(\varepsilon, \theta)$ rewrite the expression as:
\begin{equation}
     = \int\!\!\!\!\int f\left(\mathcal{T}(\omega, \vv; \theta)\right) g(\omega| \theta)\pi_2(\vv) d\omega d\vv =^* \int f(\z)q(\z|\theta)d\z 
\end{equation}
Where in * we applied the change of variables $\z = \mathcal{T}(\omega, \vv; \theta)$.
\end{proof}

\section{REPARAMETRIZATION GRADIENT DERIVATION} \label{ap:gradient-deriv}
\subsection{GENERAL EXPRESSION DERIVATION}
We can then proceed as in \ref{gcor+grep} using Lemma \ref{our-repar} and the the log derivative trick to compute the gradient of the expectation term $\nabla_{\theta}\E_{q(\z|\theta)}[f(\z)]$:
\begin{align}
    \nabla_{\theta}\E_{q(\z|\theta)}[f(\z)] &= \nabla_{\theta} \int\!\!\!\!\int f\left(\mathcal{T}(h(\varepsilon, \theta), \vv; \theta)\right) \pi_1(\varepsilon| \theta)\pi_2(\vv) d\varepsilon d\vv \\ 
    &=  \nabla_{\theta}\int\!\!\!\!\int f\left(\mathcal{T}(h(\varepsilon, \theta), \vv; \theta)\right) s(\varepsilon)\frac{g(h(\varepsilon,\theta)| \theta)}{r(h(\varepsilon,\theta)| \theta)}\pi_2(\vv) d\varepsilon d\vv \\ 
    &= \int\!\!\!\!\int s(\varepsilon)\pi_2(\vv) \nabla_{\theta}\left(f\left(\mathcal{T}(h(\varepsilon, \theta), \vv; \theta)\right) \frac{g(h(\varepsilon,\theta)| \theta)}{r(h(\varepsilon,\theta)| \theta)}\right) d\varepsilon d\vv \\
    &= \int\!\!\!\!\int s(\varepsilon)\pi_2(\vv) \frac{g(h(\varepsilon,\theta)| \theta)}{r(h(\varepsilon,\theta)| \theta)}\nabla_{\theta}\left(f\left(\mathcal{T}(h(\varepsilon, \theta), \vv; \theta)\right) \right) d\varepsilon d\vv \\
    & \quad +\int\!\!\!\!\int s(\varepsilon)\pi_2(\vv) f\left(\mathcal{T}(h(\varepsilon, \theta), \vv; \theta)\right)\nabla_{\theta}\left( \frac{g(h(\varepsilon,\theta)| \theta)}{r(h(\varepsilon,\theta)| \theta)}\right) d\varepsilon d\vv \nonumber  \\ 
    &= \int\!\!\!\!\int \pi_1(\varepsilon| \theta)\pi_2(\vv) \nabla_{\theta}\left(f\left(\mathcal{T}(h(\varepsilon, \theta), \vv; \theta)\right) \right) d\varepsilon dv 
    \\ 
    &\quad +\int\!\!\!\!\int s(\varepsilon)\pi_2(\vv) f\left(\mathcal{T}(h(\varepsilon, \theta), \vv; \theta)\right)\nabla_{\theta}\left( \frac{g(h(\varepsilon,\theta)| \theta)}{r(h(\varepsilon,\theta)| \theta)}\right) d\varepsilon d\vv \nonumber
    \\ 
    &= \underbrace{\E_{ (\varepsilon,\vv)\sim\pi_1(\varepsilon| \theta)\pi_2(\vv)} \left[\nabla_{\theta} f\left(\mathcal{T}(h(\varepsilon, \theta), \vv; \theta)\right)\right]}_{g_{rep}} \\
    & \quad + \underbrace{\E_{ (\varepsilon,\vv)\sim\pi_1(\varepsilon| \theta)\pi_2(\vv)} \left[ f\left(\mathcal{T}(h(\varepsilon, \theta), \vv; \theta)\right) \nabla_{\theta} \log\left( \frac{g(h(\varepsilon,\theta)| \theta)}{r(h(\varepsilon,\theta)| \theta)}\right) \right]}_{g_{cor}}, \nonumber 
\end{align}
where $g_{rep}$ is the reparameterization term and $g_{cor}$ the correction term.
Since $h$ is invertible in $\varepsilon$, \citet{rejection-repar} show that $\nabla_\theta \log \dfrac{q(h(\varepsilon, \theta), \theta)}{r((h(\varepsilon, \theta), \theta)}$ in $g_{cor}$ simplifies to:
\begin{align} \label{h-invert-simple}
    \nabla_\theta \log \dfrac{g(h(\varepsilon, \theta), \theta)}{r((h(\varepsilon, \theta), \theta)} = \nabla_\theta \log g(h(\varepsilon, \theta), \theta) + \nabla_\theta \log |\dfrac{\partial h(\varepsilon, \theta)}{\partial\varepsilon}|,
\end{align}

\subsection{GRADIENT CALCULATION}
In our specific case we want to take the gradient w.r.t. $\theta$ of the expression:
\begin{equation}
    \E_{ q_\psi(\z|\x; \theta)} [\log p_\phi(\x | \z)] \quad \text{where } \theta = {(\mu,\kappa)},
\end{equation}
The gradient can be computed using the Lemma \ref{our-repar} and the subsequent gradient derivation with $f(\z) = p_{\phi}(\x|\z)$. 
As specified in Section \ref{sec:reparameterization} we optimize unbiased Monte Carlo estimates of the gradient. Therefore fixed one datapoint $\x$ and sampled $(\varepsilon,\vv)\sim\pi_1(\varepsilon| \theta)\pi_2(\vv)$ the gradient is:

\begin{align}
    \nabla_\theta \E_{ q_\psi(\z|\x; \theta)} [\log p_\phi(\x | \z)] = g_{rep} + g_{cor},   
\end{align}
With
\begin{align}
g_{rep} \approx \nabla_{\theta} \log p_{\phi}\left(\x|\mathcal{T}(h(\varepsilon, \theta), \vv; \theta)\right),
\end{align}
\begin{align}
    g_{cor} \approx p_{\phi}\left(x|\mathcal{T}(h(\varepsilon, \theta), \vv; \theta)\right) \left( \nabla_\theta \log g(h(\varepsilon, \theta)| \theta) + \nabla_\theta \log |\dfrac{\partial h(\varepsilon, \theta)}{\partial\varepsilon}| \right),
\end{align}

where $g_{rep}$ is simply the gradient of the reconstruction loss w.r.t $\theta$ and can be easily handled by automatic differentiation packages. 

For what concerns $g_{cor}$ we notice that the terms $g()$ and $h()$ do not depend on $\mu$. Thus the $g_{cor}$ term w.r.t.~$\mu$ is $0$ an all the following calculations can will be only w.r.t.~$\kappa$. We therefore have:

\begin{align}
\dfrac{\partial h(\varepsilon, k)}{\partial\varepsilon}=\dfrac{-2b}{((b-1) \varepsilon + 1)^2}    \quad \text{where }b = \dfrac{-2k + \sqrt{4k^2 + (m-1)^2}}{m - 1},
\end{align}
and
\begin{align}
\nabla_\kappa \log g(\omega| k) &= \nabla_\kappa \left( \log \mathcal{C}_m(k) + \omega k + \frac{1}{2}(m-3) \log (1-\omega^2) \right) \\
&= \nabla_k  \log \mathcal{C}_m(k)  + \nabla_\kappa \left( \omega k + \frac{1}{2}(m-3) \log (1-\omega^2) \right).
\end{align}
So, putting everything together we have:
\begin{align}
    g_{cor} &= \log p_\phi(x | z) \cdot  \left[  - \frac{\mathcal{I}_{m/2}(k)}{\mathcal{I}_{m/2 - 1}(k)}  + \nabla_\kappa \left( \omega \kappa + \frac{1}{2}(m-3) \log (1-\omega^2) + \log |\dfrac{-2b}{((b-1) \varepsilon + 1)^2}| \right) \right],
\end{align}
where
\begin{align}
b &= \dfrac{-2k + \sqrt{4k^2 + (m-1)^2}}{m - 1}\\
\omega &= h(\varepsilon,\theta) = \dfrac{1 - (1 + b)\varepsilon}{1 - (1 - b)\varepsilon} \\
z &= \mathcal{T}(h(\varepsilon, \theta), \vv; \theta),
\end{align}
And the term $\nabla_\kappa \left( \omega \kappa + \frac{1}{2}(m-3) \log (1-\omega^2) + \log |\dfrac{-2b}{((b-1) \varepsilon + 1)^2}| \right)$ can be computed by automatic differentiation packages. 

\section{COLLAPSE OF THE SURFACE AREA}\label{app:surface}

\begin{figure}[H]
  \centering
  \includegraphics[width=\textwidth]{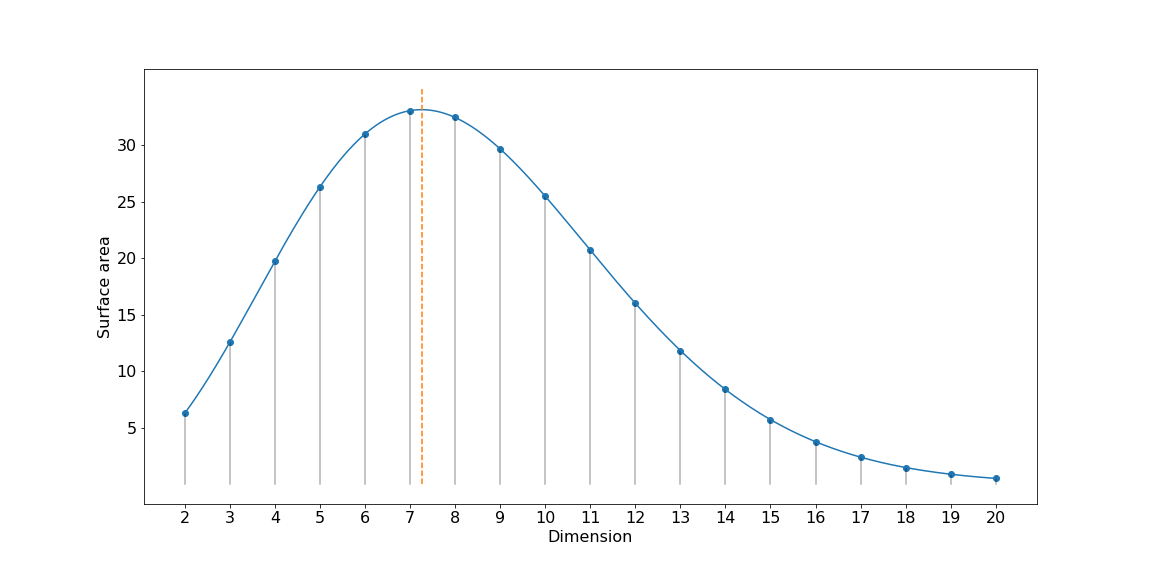}
  \caption{Plot of the unit hyperspherical surface area against dimensionality. The surface area has a maximum for $m=7$.}
  \label{fig:sphere-area}
\end{figure}

\section{EXPERIMENTAL DETAILS: ARCHITECTURE AND HYPERPARAMETERS}
\label{sect:appDetails}

\subsection{EXPERIMENT 5.2}
\paragraph{Architecture and hyperparameters} For both the encoder and the decoder we use MLPs with 2 hidden layers of respectively, [256, 128] and [128, 256] hidden units. We trained until convergence using early-stopping with a look ahead of 50 epochs. We used the Adam optimizer \citep{kingma2014adam} with a learning rate of 1e-3, and mini-batches of size 64. Additionally, we used a linear \textit{warm-up} for 100 epochs \citep{bowman2015generating}. The weights of the neural network were initialized according to \citep{glorot2010understanding}.

\subsection{EXPERIMENT 5.3}
\paragraph{Architecture and Hyperparameters}
For M1 we reused the trained models of the previous experiment, and used $K$-nearest neighbors ($K$-NN) as a classifier with $k=5$. In the $\mathcal{N}$-VAE case we used the Euclidean distance as a distance metric. For the \Sv-VAE the geodesic distance $\arccos(\x^{\top} \y)$ was employed. The performance was evaluated for $N = [100, 600, 1000]$ observed labels.

The stacked M1+M2 model uses the same architecture as outlined by \cite{kingma-semi-super}, where the MLPs utilized in the generative and inference models are constructed using a single hidden layer, each with 500 hidden units. The latent space dimensionality of $\z_1$, $\z_2$ were both varied in $[5, 10, 50]$. We used the rectified linear unit (ReLU) as an activation function. Training was continued until convergence using early-stopping with a look ahead of 50 epochs on the validation set. We used the Adam optimizer with a learning rate of 1e-3, and mini-batches of size 100. All neural network weight were initialized according to \citep{glorot2010understanding}. $N$ was set to 100, and the $\alpha$ parameter used to scale the classification loss was chosen between $[0.1, 1.0]$. Crucially, we train this model end-to-end instead of by parts.

\subsection{EXPERIMENT 5.4}\label{ap:link-pred}
\paragraph{Architecture and Hyperparameters} We are training a Variational Graph Auto-encoder (VGAE) model, a state-of-the-art link prediction model for graphs, as proposed in \citet{kipf2016VGAE}. For a fair comparison, we use the same architecture as in the original paper and we just change the way the latent space is generated using the vMF distribution instead of a normal distribution. All models are trained for 200 epochs on Cora and Citeseer, and 400 epochs on Pubmed with the Adam optimizer. Optimal learning rate $lr\in\{0.01, 0.005, 0.001\}$, dropout rate $p_{do}\in \{0, 0.1, 0.2, 0.3, 0.4\}$ and number of latent dimensions $d_z\in \{8,16,32,64\}$ are determined via grid search based on validation AUC performance. For \Sv-VGAE, we omit the $d_z=64$ setting as some of our experiments ran out of memory. The model is trained with a single hidden layer with $32$ units and with document features as input, as in \citet{kipf2016VGAE}. The weights of the neural network were initialized according to \citep{glorot2010understanding}. For testing, we report performance of the model selected from the training epoch with highest AUC score on the validation set. Different from \citep{kipf2016VGAE}, we train both the \Nv-VGAE and the \Sv-VGAE models using negative sampling in order to speed up training, i.e.~for each positive link we sample, uniformly at random, one negative link during every training epoch. All experiments are repeated 5 times, and we report mean and standard error values.

\subsubsection{FURTHER EXPERIMENTAL DETAILS}
Dataset statistics are summarized in Table \ref{tab:link-datasets}. Final hyperparameter choices found via grid search on the validation splits are summarized in Table \ref{tab:link-params}.

\begin{table}[htp]
\centering
\caption{\label{tab:link-datasets}Dataset statistics for citation network datasets.}
\bigskip
\begin{tabular}{l r r r}
\toprule
Dataset & Nodes & Edges  & Features \\[0.05em]\hline \\[-0.8em]
\textbf{Cora} & 2,708 & 5,429 & 1,433 \\
\textbf{Citeseer} & 3,327 & 4,732 & 3,703 \\
\textbf{Pubmed} & 19,717 & 44,338 & 500 \\
\bottomrule
\end{tabular}
\end{table}

\begin{table}[htp]
\centering
\caption{\label{tab:link-params}Best hyperparameter settings found for citation network datasets.}
\bigskip
\begin{tabular}{l c c c c}
\toprule
Dataset & Model & $lr$ & $p_{do}$  & $d_z$ \\[0.05em]\hline \\[-0.8em]
\multirow{2}{0pt}{\textbf{Cora}}     & $\mathcal{N}$-VAE & 0.005  & 0.4 & 64 \\
                                  & $\mathcal{S}$-VAE & 0.001 & 0.1 & 32 \\ \midrule
\multirow{2}{0pt}{\textbf{Citeseer}} & $\mathcal{N}$-VAE & 0.01  & 0.4 & 64 \\
                                  & $\mathcal{S}$-VAE & 0.005 & 0.2 & 32 \\ \midrule
\multirow{2}{0pt}{\textbf{Pubmed}}   & $\mathcal{N}$-VAE & 0.001 & 0.2 & 32 \\
                                  & $\mathcal{S}$-VAE & 0.01 & 0.0 & 32 \\
                                \bottomrule
\end{tabular}
\end{table}

\newpage
\section{VISUALIZATION OF SAMPLES AND LATENT SPACES} \label{app:mnist-latent-visual}

\begin{figure*}[h!]
\centering
     \subfigure[$d=2$]{\includegraphics[width=0.24\textwidth]{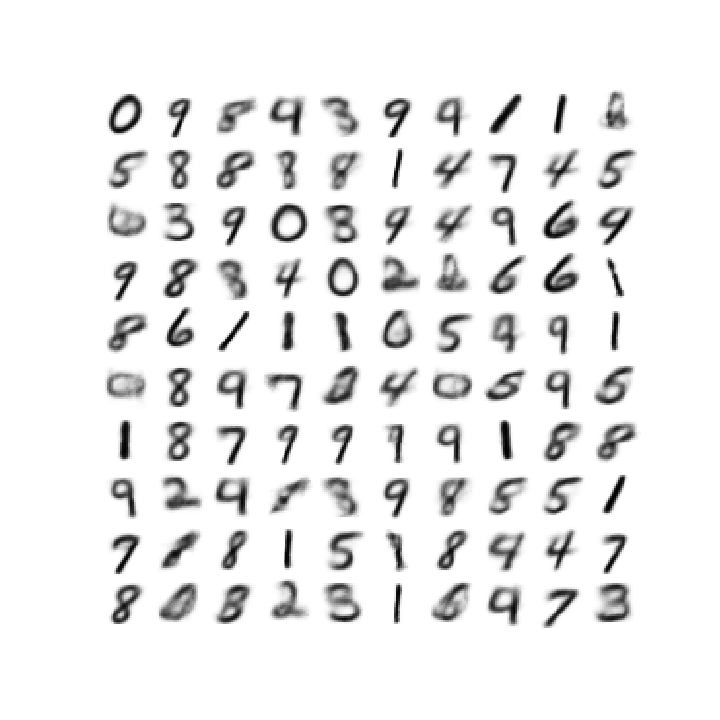} \label{fig:r2sample}}
     \subfigure[$d=5$]{\includegraphics[width=0.24\textwidth]{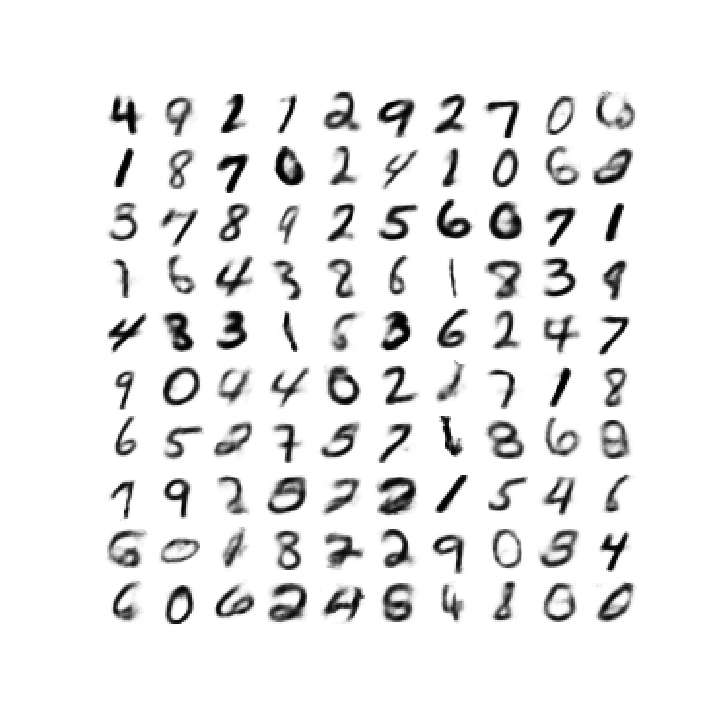} \label{fig:r5sample}}
     \subfigure[$d=10$]{\includegraphics[width=0.24\textwidth]{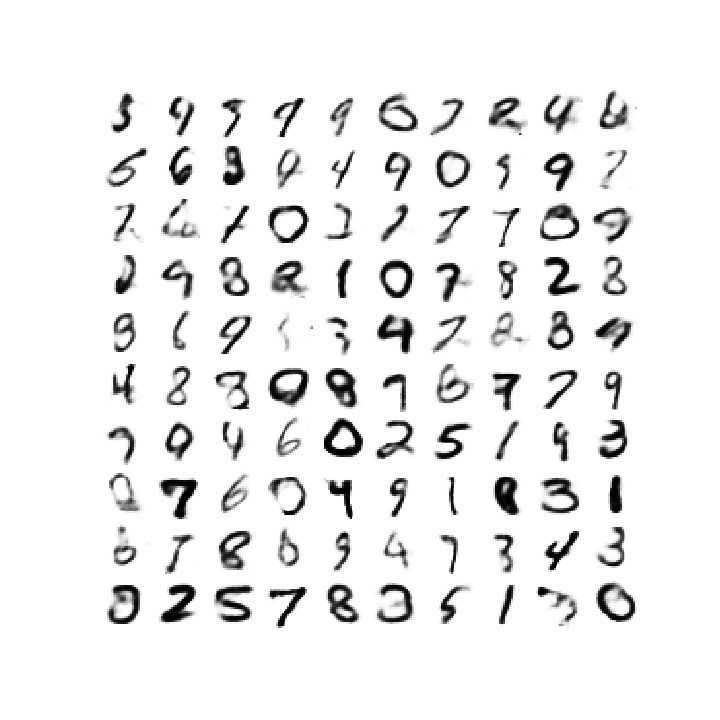} \label{fig:r10sample}}
     \subfigure[$d=20$]{\includegraphics[width=0.24\textwidth]{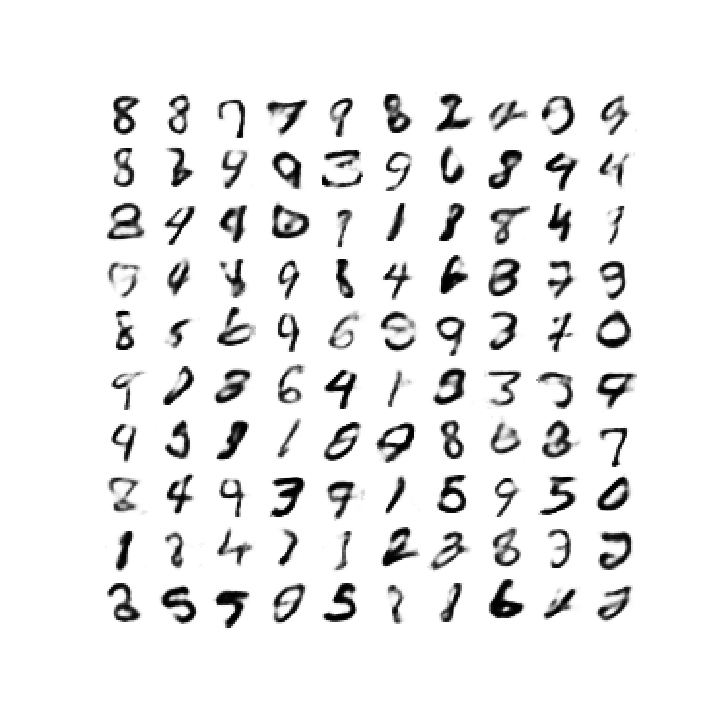} \label{fig:r20sample}}
% \vskip -0.2in
\caption{Random samples from $\mathcal{N}$-VAE of MNIST for different dimensionalities of latent space.}
\label{fig:nvae_samples}
\end{figure*}

\begin{figure*}[h!]
\centering
     \subfigure[$d=2$]{\includegraphics[width=0.24\textwidth]{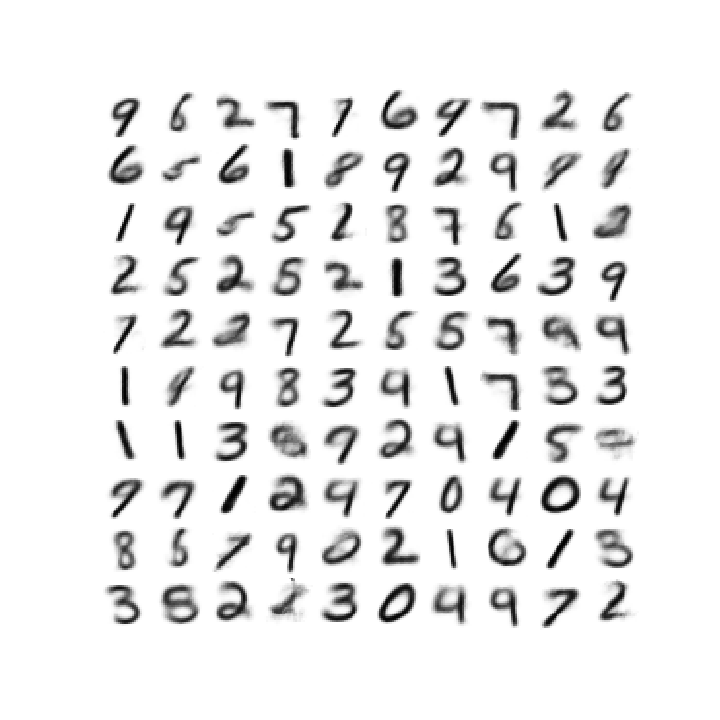} \label{fig:s2sample}}
     \subfigure[$d=5$]{\includegraphics[width=0.24\textwidth]{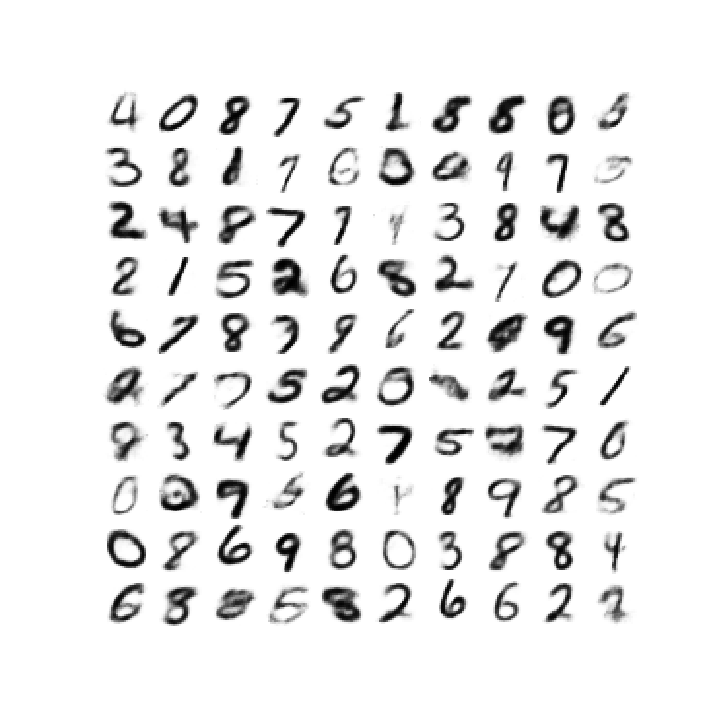} \label{fig:s5sample}}
     \subfigure[$d=10$]{\includegraphics[width=0.24\textwidth]{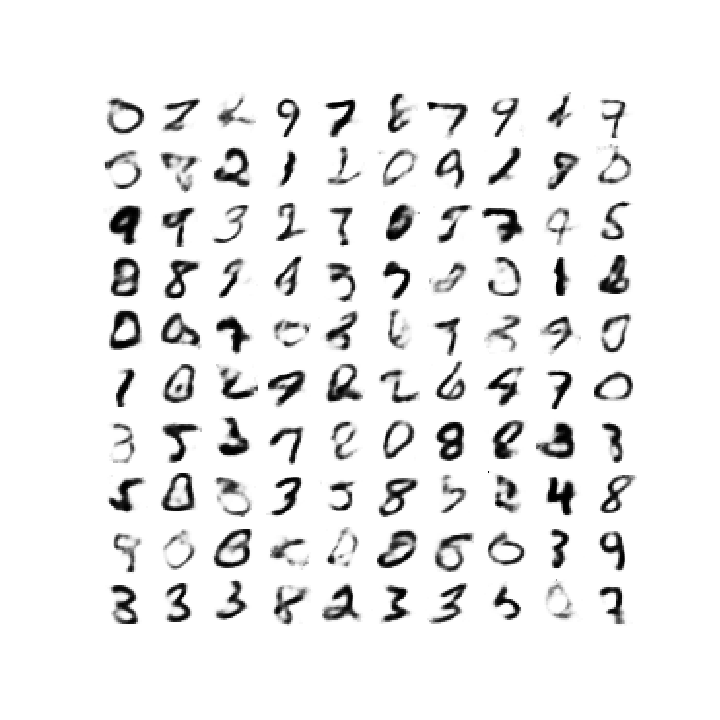} \label{fig:s10sample}}
     \subfigure[$d=20$]{\includegraphics[width=0.24\textwidth]{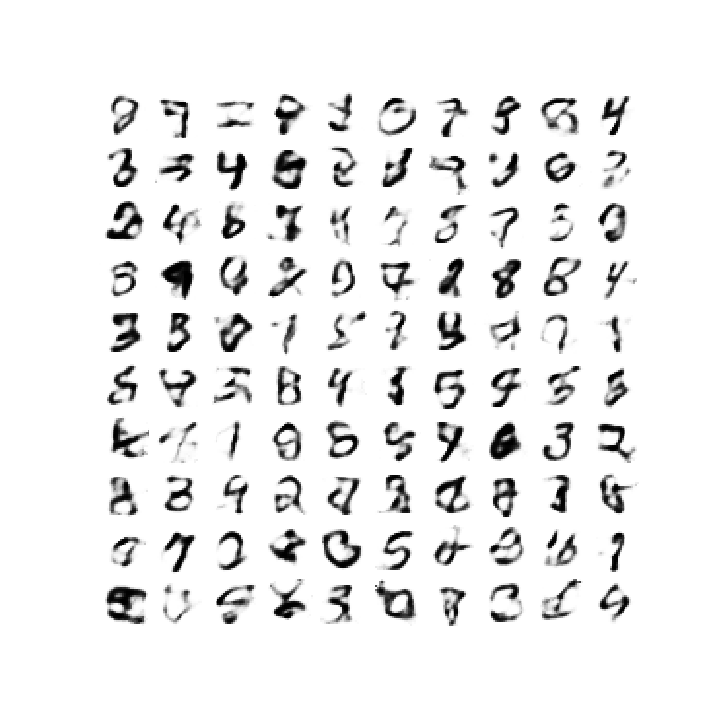} \label{fig:s20sample}}
% \vskip -0.2in
\caption{Random samples from $\mathcal{S}$-VAE of MNIST for different dimensionalities of latent space.}
\label{fig:svae_samples}
\end{figure*}

\begin{figure*}[h!]
\centering
     \subfigure[$\mathcal{N}$-VAE]{\includegraphics[width=0.45\textwidth]{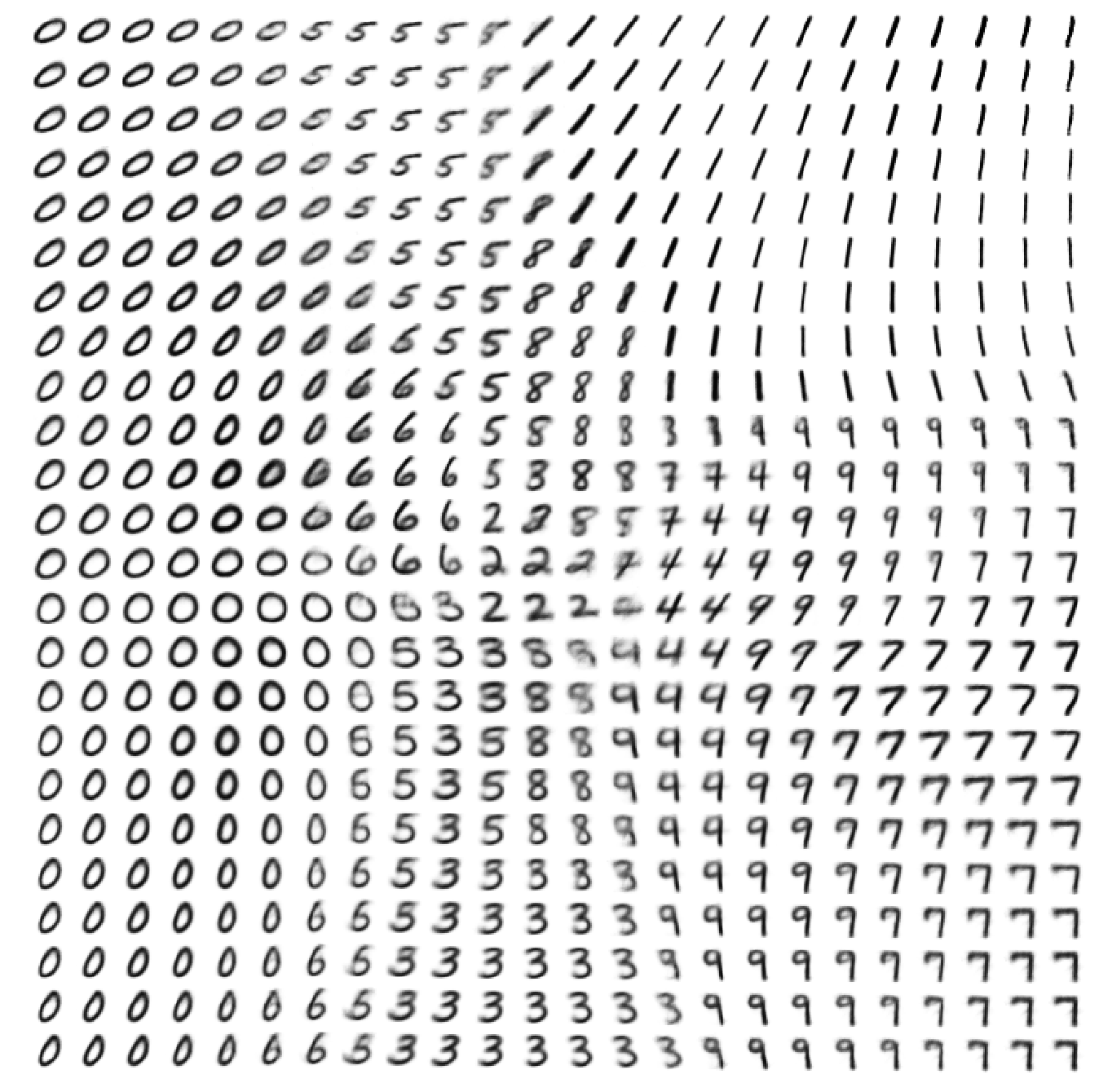} \label{fig:r2latent}}
     \subfigure[$\mathcal{S}$-VAE]{\includegraphics[width=0.49\textwidth]{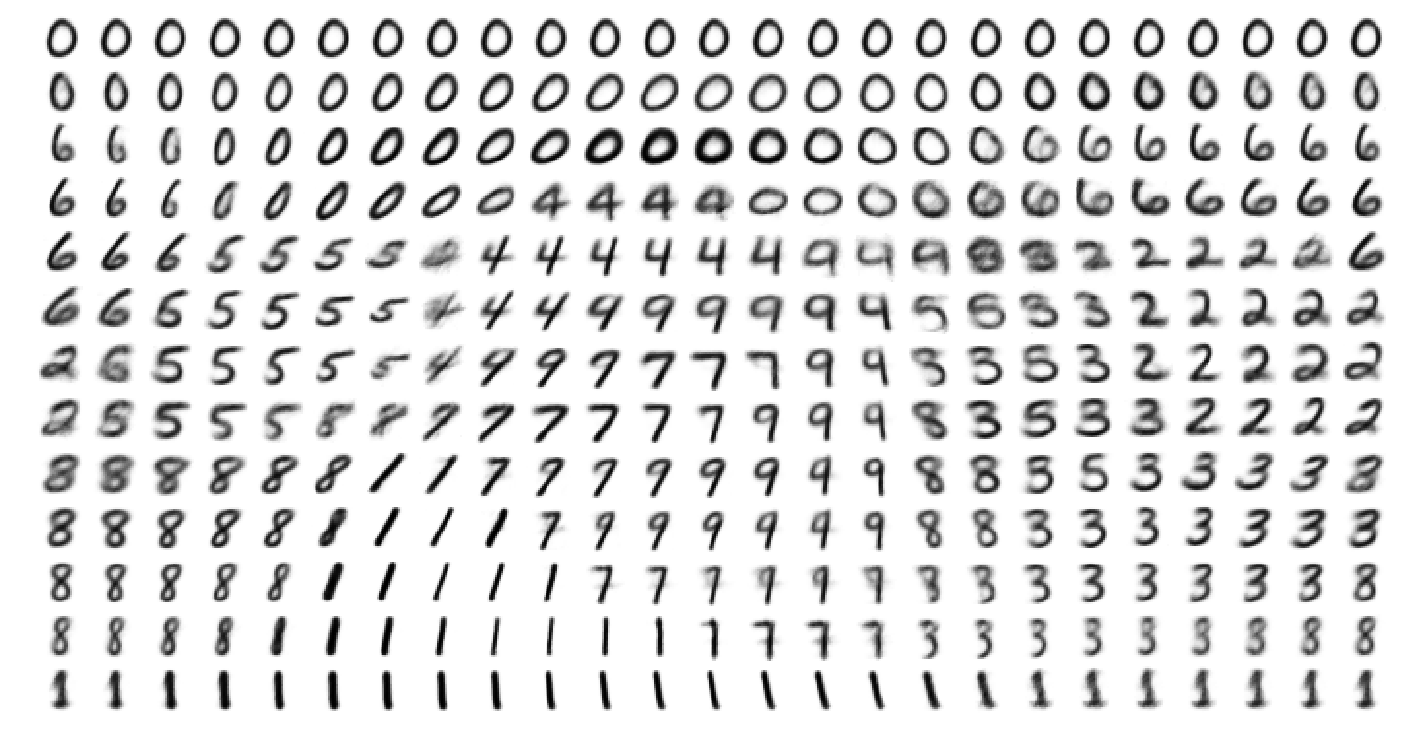} \label{fig:s2latent}}
\caption{Visualization of the 2 dimensional manifold of MNIST for both the \Nv-VAE and \Sv-VAE. Notice that the \Nv-VAE has a clear center and all digits are spread around it. Conversely, in the \Sv-VAE instead all digits occupy the entire space and there is a sense of continuity from left to right.}
\label{fig:latent_spaces_appendix}
\end{figure*}

\vfill
\newpage
\section{VISUALIZATION OF CONDITIONAL GENERATION} \label{app:cond-visual}
\begin{figure*}[h!]
\centering
\includegraphics[width=0.3\textwidth]{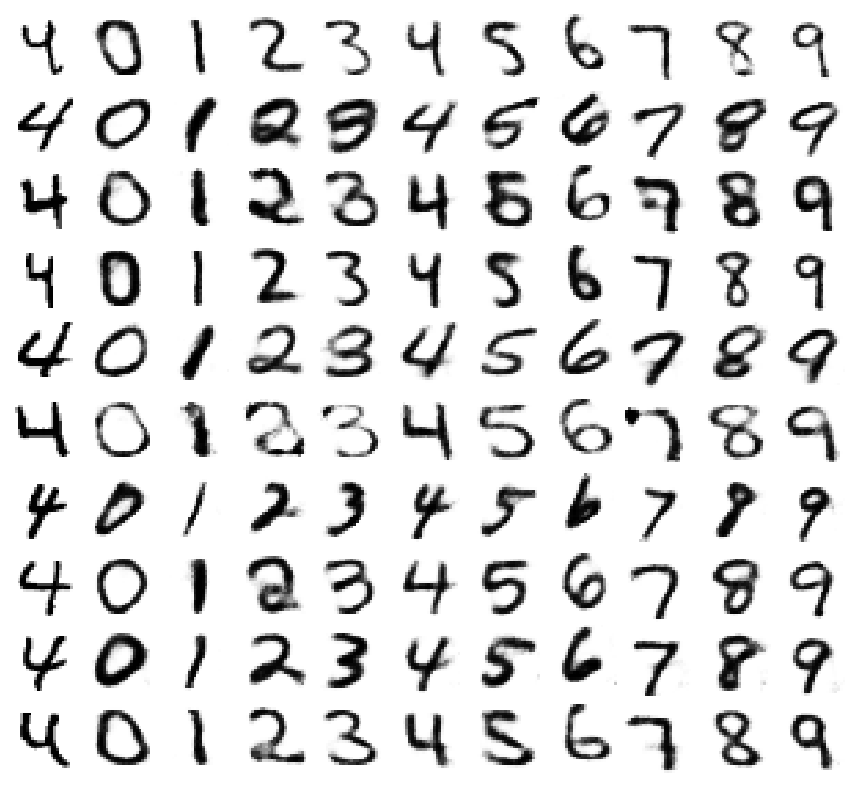}
\caption{Visualization of handwriting styles learned by the model, using conditional generation on MNIST of M1+M2 with $dim(\z_1) = 50$, $dim(\z_2)=50$, \Sv+\Nv. Following \citet{kingma-semi-super}, the left most column shows images from the test set. The other columns show analogical fantasies of $\x$ by the generative model, where in each row the latent variable $\z_2$ is set to the value inferred from the test image by the inference network and the class label $\y$ is varied per column.}
\label{fig:latent_spaces}
\end{figure*}

\vfill
\end{document}